\documentclass[moor]{informs1}              



\usepackage{natbib}
 \NatBibNumeric
 \bibpunct[, ]{[}{]}{,}{n}{}{,}%
 
 \usepackage{times}
\usepackage{amsmath}
\usepackage{algpseudocode}
\usepackage{algorithm}
\usepackage{bbm}
\usepackage{thmtools}
\usepackage{thm-restate}
\usepackage{enumitem}
\usepackage{lipsum}

\usepackage[colorlinks=true,breaklinks=true,bookmarks=true,urlcolor=blue,
     citecolor=blue,linkcolor=blue,bookmarksopen=false,draft=false]{hyperref}
\usepackage{cleveref}

\newcommand{\banditMNL}{MNL-Bandit}
\newcommand{\reg}{\mbox{Reg}}
\newcommand{\Reg}{\reg}
\newcommand{\Ex}{\bb{E}}

\newtheorem{theor1}{Theorem}
\newtheorem{lemma1}{Lemma}

\newtheorem{assumption1}{Assumption}

\def\ep#1{\mathcal{#1}}
\def\Halmos{\square}
\def\mb#1{\mathbf{#1}}
\def\bb#1{\mathbb{#1}}
\def\bbm#1{\mathbbm{#1}}
\newcommand{\suchthat}{\;\ifnum\currentgrouptype=16 \middle\fi|\;}

\graphicspath{{plots/}}

\newcommand{\comment}[1]{}

\def\EMAIL#1{\href{mailto:#1}{#1}}


\EquationsNumberedThrough    


\begin{document}


 \RUNAUTHOR{Agrawal et al.} 

\RUNTITLE{Thompson Sampling for the MNL-Bandit}

 \TITLE{Thompson Sampling for the MNL-Bandit}

\ARTICLEAUTHORS{%
\AUTHOR{Shipra Agrawal}
\AFF{Industrial Engineering and Operations Research, Columbia University, New York, NY. \EMAIL{sa3305@columbia.edu}}
\AUTHOR{Vashist Avadhanula}
\AFF{Decision, Risk and Operations, Columbia Business School, New York, NY. \EMAIL{vavadhanula18@gsb.columbia.edu}}
\AUTHOR{Vineet Goyal}
\AFF{Industrial Engineering and Operations Research, Columbia University, New York, NY. \EMAIL{vg2277@columbia.edu}}
\AUTHOR{Assaf Zeevi}
\AFF{Decision, Risk and Operations, Columbia Business School, New York, NY. \EMAIL{assaf@gsb.columbia.edu}}
} 

\ABSTRACT{%
We consider a sequential subset selection problem under parameter uncertainty, where at each time step, the decision maker selects a subset of cardinality $K$ from $N$ possible  items (arms), and observes a (bandit) feedback in the form of the index of one of the items in said subset, or none. Each item in the index set is ascribed a certain value (reward), and the feedback is governed by a Multinomial Logit (MNL) choice model whose parameters are a priori unknown.  The objective of the decision maker is to maximize the expected cumulative rewards over a finite horizon $T$, or alternatively, minimize the regret relative to an oracle that knows the MNL parameters.  We refer to this as the MNL-Bandit problem. This problem is representative of a larger family of exploration-exploitation problems that involve a combinatorial objective, and arise in several important application domains. We present an approach to adapt Thompson Sampling to this problem and show that it achieves near-optimal regret as well as attractive numerical performance.  
}%


\KEYWORDS{Thompson Sampling, Exploration-Exploitation, Multinomial Logit Choice Model}
\MSCCLASS{}
\ORMSCLASS{Primary: ; secondary: }
\HISTORY{}

\maketitle

\section{Introduction.}
 In the traditional stochastic multi-armed Bandit (MAB) problem, the decision maker selects one of, say,  $N$ arms in each round and receives feedback in the form of a noisy reward characteristic of that arm. Regret minimizing strategies are typically based on the principle of optimism in the face of uncertainty, a prime example of which are the family of upper confidence bound policies (UCB), which allow the player to learn the identity of the best arm through sequential experimentation, while concurrently not spending ``too much'' of the sampling efforts on the sub-optimal arms. In this paper we consider a combinatorial variant of this problem where in each time step the player selects a bundle of $K$ arms, after which s/he gets to see the 
reward associated with one of the arms in that bundle, or observing no reward at all.  One can think of the ``no reward''  as the result of augmenting each bundle with a further index that belongs to a ``null arm'' that cannot be directly chosen but can be manifest as a feedback; this structure will be further motivated shortly.    The identity of the arm within the bundle that yields the reward observation (or the ``null'' arm that yields no observation) is determined by means of a probability distribution on the index set of cardinality $K+1$ (the $K$ arms plus the ``null'' arm). In this paper the distribution is specified by means of a multinomial logit model (MNL); hence the name MNL-Bandit.   

A possible interpretation of this MNL-Bandit problem is as follows. A decision maker is faced with the problem of determining which  subset (of at most cardinality $K$) of $N$ items  to present  to users that arrive sequentially, where user preferences for said items are unknown. Each user either selects one of the items s/he is offered or selects none (the ``null arm'' option described above).  Every item presents some reward which is item-specific.  Based on the observations of items users have selected,  the decision maker needs to ascertain the composition of the ``best bundle,'' which involves balancing an exploration over bundles to learn the users' preferences, while simultaneously exploiting the bundles that exhibit good reward. (The exact mathematical formulation is given below.)  A significant challenge here is the combinatorial nature of the problem just described, as the space of possible subsets of cardinality $K$ is exponentially large, and for reasonable sized time horizons cannot be efficiently explored.  

The problem as stated above is not new, but there is surprisingly little antecedent literature on it; the review below will expound on its history and related strands of work.  It arises in many real-world instances, perhaps most notably in display-based online advertising.  Here the publisher has to select a set of advertisements to display to users.  Due to competing ads, the click rates for an individual ad depends on the overall subset of ads to be  displayed; this is referred to as a {\it substitution} effect.  For example, consider a user presented with two similar vacation packages from two different sources. The user's  likelihood of clicking on one of the ads in this scenario, would most likely differ from the situation where one of the ads is presented as a standalone. Because every advertisement is valued differently from the publisher's perspective,  the set of ads selected for display has a significant impact on revenues. A similar problem arises in online retail settings, where the retailer need to select a subset (assortment) of products to offer. Here demand for a specific product is influenced by the assortment of products offered. To capture these substitution effects,  choice models are often used to specify user preferences in the form of a probability distribution over items in a subset.  

The MNL-Bandit  is a natural way to cast the exploration-exploitation problem discussed above into a well studied machine learning paradigm, and allows to more easily adapt algorithmic ideas developed in that setting.  In particular, this paper focuses on a Thompson Sampling (TS) approach to the MNL-Bandit problem. This is  primarily motivated by the attractive empirical properties that have been observed over a stream of recent papers in the context of TS versus more traditional approaches such as upper confidence bound policies (UCB). For the MNL-Bandit this has further importance given the combinatorial nature of the dynamic optimization problem one is attempting to solve.  One of the main contributions of the present paper is in highlighting the salient features of TS that need to be adapted or customized to facilitate the design of an algorithm in the MNL-Bandit, and to elucidate  their role in proving regret-optimality for this variant of TS. To the best of our knowledge some of these ideas are new in the TS-context, and can hopefully extend its scope to combinatorial-type problems that will go beyond the MNL-Bandit. 




\section { Problem Formulation.} To formally state  our problem,  consider an option space containing $N$ distinct elements, indexed by $1,2,\ldots,N$ and their values denoted by  $r_1,\ldots,r_N$, with $r$ mnemonic for reward, though we will also use the term {\it revenue} in this context.   Since the user need not necessarily choose any of the options presented, we model this ``outside alternative'' as an additional item denoted with an index of ``0'' which augments the index set. We assume that for any offer set,  $S \subset \{1,\ldots,N\}$, the user will be selecting only one of the offered alternatives or item $0$, and this selection is given by a Multinomial Logit  (MNL) choice model.  Under this model, the probability that a user chooses item $i \in S$ is given by, \vspace{-0.05in}
\begin{eqnarray}\label{choice_probabilities} 
p_i(S) =\begin{cases}
 \displaystyle \frac{v_i}{v_0+\sum_{j \in S}v_j}, &\quad \text{if} \; i \in S \cup \{0\}\\
0, & \quad \; \text{otherwise},\vspace{-0.05in}
\end{cases}
\end{eqnarray}
where $v_i$ is a parameter of the MNL model corresponding to item $i$. Without loss of generality, we can assume that $v_0 = 1$.  (The focus on MNL is due to its prevalent use in the context of modeling substitution effects, and its tractability; see further discussion in related work.)
  
Given the above,  the expected revenue corresponding to the offer set $S$, $R(S)$ is given by
\begin{eqnarray}\vspace{-0.05in}
R(S, \mb{v}) = \sum_{i\in S} r_i p_i(S) = \sum_{i \in S} \frac{r_iv_i}{1+\sum_{j \in S} v_j}.\label{MNL_revenue}\vspace{-0.05in}
\end{eqnarray}
and the corresponding {\it static} optimization problem, i.e.,  when the parameter vector $\mb{v} =  (v_0,\ldots,v_N)$ and henceforth, $p_i(S)$ is known a priori,  is given by,   \vspace{-0.05in}
\begin{eqnarray}
\textstyle {\max}\left\{\;{R}(S,\mb{v}) \Big | |S| \leq K\right\}.\label{assort_opt}\vspace{-0.05in}
\end{eqnarray}
The cardinality constraints specified above, arise naturally in many applications. Specifically, a publisher/retailer is constrained by the space for advertisements/products and has to limit the number of ads/products that can be displayed. 

Consider a time horizon $T$, where a subset of items can be offered at time periods $t = 1,\ldots,T$.  Let $S^*$ be the {\it offline optimal offer set} for \eqref{assort_opt} under full information, namely,  when the values of $p_i(S)$, as given by \eqref{choice_probabilities}, are known a priori. In the MNL-Bandit,  the decision maker does not know the values of $p_i(S)$ and can only make sequential offer set decisions, $S_1,\ldots,S_T$,  at times $1,\ldots,T$, respectively. The objective is to design an algorithm that selects  a (non-anticipating)  sequence of offer sets in a path-dependent manner (namely, based on past choices and observed responses) to maximize cumulative expected revenues over the said horizon, or alternatively, minimize the {\it regret} defined  as \vspace{-0.05in}
\begin{eqnarray}\label{Regret}
\textstyle \Reg(T,\mb{v}) = \Ex\left[\sum_{t=1}^T R(S^*,\mb{v}) - R(S_t,\mb{v})\right],\vspace{-0.05in}
\end{eqnarray}
where $R(S,\mb{v})$ is the expected revenue when the offer set is $S$, and is as defined in \eqref{MNL_revenue}.
Here we make explicit the dependence of regret on the time horizon $T$ and the parameter vector $\mb{v}$ of the MNL model that determines the user preferences and choices.

\vspace{2mm}
\noindent {\bf Outline}. We review related literature and describe our contributions in Section~\ref{sec:relatedWork}. In Section \ref{sec:ourAlgorithm}, we present our adaptations of the Thompson Sampling algorithm for the \banditMNL, and in Section \ref{sec:regretAnalysis}, we prove our main result that our algorithm achieves an $\tilde{O}(\sqrt{NT}\log{TK})$ regret upper bound. {Section~\ref{sec:computations} demonstrates the empirical efficiency of our algorithm design.}

\section{Related Work and Overview of Contribution.}
\label{sec:relatedWork}
A basic pillar in the MNL-Bandit problem is the MNL choice model, originally introduced (independently) by  \citet{Luce59} and  \citet{Plackett75};  see also \citet{train2003discrete, McFadden78,BenLerman85} for further discussion and survey of other commonly used choice models. This model is by far the most widely used choice model insofar as capturing substitution effects that are a significant element in our problem. 
Initial motivation for this traces to online retail, where a retailer has to decide on a subset of items to offer from a universe of substitutable products for display. In this context, \citet{rusme} and \citet{saure} were the first two papers we are aware of, to consider a dynamic learning problem, in particular, focusing on minimizing regret under the MNL choice model. Both papers  develop an ``explore first and exploit later'' approach. 
Assuming knowledge of the ``gap" between the optimal and the next-best assortment,  they show an asymptotic $O(N\log{T})$ regret bound. (This assumption is akin to the ``separated arm" case in the MAB setting.)  It is worth noting that the algorithms developed in those papers require a priori  knowledge of this gap as a tuning input, which makes the algorithms parameter dependent.  
In a more recent paper,  \citet{agrawalnear} show how to exploit specific characteristics of the MNL model to develop a policy based on the principle of ``optimism under uncertainty'' (UCB-like algorithm, see \citet{auer}) which does not rely on the a priori knowledge of this gap or separation information and achieves a worst-case regret bound of $O(\sqrt{NT\log T})$. A regret lower bound of $\Omega(\sqrt{NT/K})$ for this problem is also presented in this work, which was subsequently improved to $\Omega(\sqrt{NT})$ in a recent work by \citet{xichen}.

It is widely recognized that UCB-type algorithms that optimize the worst case regret typically tend to spend ``too much time'' in the exploration phase, resulting in poor performance in practice (regret-optimality bounds notwithstanding).  To that end, several studies (\citet{chapelle}, \citet{graepel2010web}, \citet{may2012optimistic}) have demonstrated that TS significantly outperforms the state of the art methods in practice. Despite being easy to implement and often empirically superior, TS based algorithms are hard to analyze and theoretical work on TS is limited. To the best of our knowledge, \citet{agrawal2013further} is the first work to provide a finite time  worst-case regret bounds for the MAB problem that are independent of problem parameters. 

A naive translation of the \banditMNL~problem to an  MAB-type  setting would create $N \choose K$ ``arms'' (one for each offer set of size $K$).  For an ``arm'' corresponding to subset $S$, the reward is given by $R(S)$  (see \ref{assort_opt}). Managing this exponentially large arm space is prohibitive for obvious reasons. Popular extensions of MAB for ``large scale''  problems include the linear bandit (e.g., \citet{auer2003}, \citet{linucb}) for which  \citet{agrawal2013thompson} present a TS-based algorithm and provide finite time regret bounds. However, these approaches do not apply directly to our problem, since the revenue corresponding to each offered set is not linear in problem parameters. Moreover, for the regret bounds in those settings to be attractive, the dimension $d$ of parameters should be small, this dimension would be $N$ here. \citet{gopalan2014thompson} consider a variant of MAB where one can play a subset of arms in each round and the expected reward is a function of rewards of the arms played. This setting is similar to the \banditMNL, though the regret bounds they develop are dependent on the instance parameters as well as the number of possible actions which can be large in our combinatorial problem setting.  \citet{russo2014learning} consider a TS based learning algorithm for parametric bandit problems and provide finite time Bayesian regret bounds. Though \banditMNL\;can be formulated as a parametric bandit problem, Bayesian regret is a weaker notion of regret than the worst case regret, which is the focus of our work. Moreover, the computational tractability of updating the posterior in both the approaches (\citet{gopalan2014thompson} and \citet{russo2014learning}) is not immediately clear. \citet{russo2018tutorial} presents efficient heuristics to approximate the TS algorithm considered in \citet{russo2014learning}. However, it is not immediately clear if these approximate TS based approaches facilitate theoretical anaylsis. 
\paragraph{Our Contributions.}  In this work, relying on structural properties of the MNL model,  we develop a TS approach that is computationally efficient and yet achieves parameter independent (optimal in order) regret bounds. 
 Specifically, we present a computationally efficient TS algorithm for the \banditMNL~which uses a prior distribution on the parameters of the MNL model such that the posterior update under the MNL-bandit feedback is tractable. 
A key ingredient in our approach is a two moment approximation of the posterior  and the ability to judicially correlate samples, which is done by embedding the two-moment approximation in a normal family.   We show that our algorithm achieves a worst-case (prior-free) regret bound of $O(\sqrt{NT}\log{TK})$ under a mild assumption that $v_0\ge v_i$ for all $i$ (more on the practicality of this assumption later in the text); the bound is non-asymptotic, the ``big oh" notation is used for brevity.   This regret bound is independent of the parameters of the MNL choice model and hence holds uniformly over all problem instances.  The regret is comparable to the existing upper bound of $O(\sqrt{NT})$ provided by \citet{agrawalnear}, yet the numerical results demonstrate that our Thompson Sampling based approach significantly outperforms the UCB-based approach of \citet{agrawalnear}. Furthermore, the regret bound is also comparable to the lower bound of $\Omega(\sqrt{NT})$ established by \citet{xichen} under the same assumption, suggesting the optimality of our algorithm.  The methods developed in this paper highlight some of the key challenges involved in adapting the TS approach to the \banditMNL, and present a blueprint to address these issues that we hope will be more broadly applicable,  and form the basis for further work in the intersection of combinatorial optimization and machine learning.

\section{Algorithm.}
\label{sec:ourAlgorithm}
In this section, we describe our posterior sampling (aka Thompson Sampling) based algorithm for the \banditMNL~problem. The basic structure of Thompson Sampling involves maintaining a posterior on the unknown problem parameters, which is updated every time new feedback is obtained. In the beginning of every round, a sample set of parameters is generated from the current posterior distribution, and the algorithm chooses the best option according to these sample parameters. Due to its combinatorial nature, designing an algorithm in this framework for the \banditMNL~problem involves several new challenges as we describe below, along with our algorithm design choices to address them.
%

\subsection{Challenges and key ideas.}


\subsection*{ Conjugate priors for the MNL parameters.}
In the \banditMNL~problem, there is one unknown parameter $v_i$  associated with each item. 
To adapt the TS algorithm for our problem, we would need to maintain a joint posterior for $(v_1,\ldots,v_N)$. However, updating such a joint posterior is non-trivial since the feedback observed in every round is 
a sample from multinomial choice probability, ${v_i}/({1+\sum_{j\in S} v_j})$, which clearly depends on the subset  $S$ offered in that round. In particular, even if we initialize with an independent prior from a popular analytical family such as multivariate Gaussian, the posterior distribution after observing the MNL choice feedback can have a complex description. To address this, we leverage a sampling technique introduced in \citet{agrawalnear} that allows us to decouple individual parameters from the MNL choice feedback by obtaining unbiased estimates of these parameters. We utilize these unbiased estimates to efficiently maintain independent conjugate priors for the parameters $v_i$ for each $i$. Details of the resulting TS algorithm are presented in Algorithm \ref{learn_algo} in Section \ref{sec:vanilla}.

\medskip \noindent {\bf Posterior approximation and Correlated sampling.} Algorithm \ref{learn_algo} presents unique challenges in theoretical analysis. A worst case regret analysis of Thompson Sampling based algorithms for MAB typically proceeds by showing that the best arm is optimistic at least once every few steps, in the sense that its sampled parameter is better than the true parameter. Such a proof approach for our combinatorial problem requires that every few steps, all the $K$ items in the optimal offer set have sampled parameters that are better than their true counterparts. However, Algorithm \ref{learn_algo} samples the posterior distribution for each parameter {\it independently} in each round.  This makes the probability of being optimistic exponentially small in $K$. 

We address this challenge by employing {\it correlated sampling} across items. To implement correlated sampling, we find it useful to approximate the Beta posterior by a Gaussian distribution with approximately the same mean and variance as the Beta distribution; what was referred to in the introduction as a two-moment approximation.  This allows us to generate correlated samples from the $N$ Gaussian distributions as linear transforms of a single standard Gaussian. Under such correlated sampling, the probability of all $K$ optimal items to be simultaneously optimistic is a constant, as opposed to being exponentially small (in $K$) in the case of independent samples. However, such correlated sampling reduces the overall variance of the maximum of $N$ samples severely, thus reducing exploration. We boost the variance by taking $K$ samples instead of a single sample of the standard Gaussian. The resulting variant of Thompson Sampling algorithm is presented in Algorithm \ref{learn_algo_normal} in Section \ref{sec:variant}. We prove near-optimal regret bound for this algorithm in Section \ref{sec:regretAnalysis}.

\subsection{A TS algorithm with independent conjugate Beta priors} 
\label{sec:vanilla}
Here, we present a first version of our Thompson sampling algorithm, which serves as an important building block for our main algorithm. 
In this  version of the algorithm, we maintain a Beta posterior distribution for each item $i=1,\ldots, N$, which is updated as we observe users' choice of items from the offered subsets.  A key challenge here is to design priors that can be efficiently updated on observing  user choice feedback, in order to obtain increasingly accurate estimates of parameters $\{v_i\}$.  To address this, we use the sampling technique introduced in \citet{agrawalnear} to decouple individual parameters from the complex MNL feedback. The idea is to offer a set $S$ multiple times; in particular, a chosen $S$ is offered repeatedly until an ``outside option'' is picked (in the motivating application discussed earlier, this corresponds displaying the same subset of ads until we observe a user who does not  click on any of the displayed ads). Proceeding in this manner, the average number of times an item $i$ is selected provides an unbiased estimate of parameter $v_i$. Moreover, the number of times an item $i$ is selected  is also independent of the displayed set and is a geometric distribution with success probability $1/(1+v_i)$ and mean $v_i$. 
This observation is used as the basis for our epoch based algorithmic structure and our choice of prior/posterior, as a conjugate to this geometric distribution. The following lemmas provide important building blocks for our construction. Their proofs have been deferred to the appendix. 
\begin{restatable}[Agrawal et al. 2016]{lemma1}{agrawalUnbiased}
\label{unbiased_estimate}
~Let $\tilde{v}_{i,\ell}$ be the number of times an item $i\in S_\ell$ is picked when the set $S_\ell$ is offered repeatedly until no-click (outside option is picked). Then, $\tilde{v}_{i,\ell}, \forall \ell,i$ are  i.i.d geometrical  random variables with success probability $\frac{1}{1+v_i}$, and expected value $v_i$. \vspace{-0.1in}
\end{restatable}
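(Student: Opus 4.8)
The plan is to establish the lemma by directly analyzing the stochastic process generated by offering a fixed set $S_\ell$ repeatedly until the outside option is selected. First I would fix an epoch $\ell$ and condition on the offered set $S_\ell = S$. Within this epoch, each ``trial'' (one presentation of $S$) is an independent draw from the MNL distribution \eqref{choice_probabilities}, so the sequence of picks is i.i.d.\ multinomial over $S \cup \{0\}$. The epoch terminates the first time item $0$ is drawn. For a fixed item $i \in S$, the quantity $\tilde v_{i,\ell}$ is the number of times $i$ appears before the first appearance of $0$.

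The key step is to show $\tilde v_{i,\ell}$ is geometric with success probability $1/(1+v_i)$, and crucially that this does \emph{not} depend on $S$. I would argue this by ``thinning'': restrict attention only to those trials in which the outcome is either $i$ or $0$ (ignore trials where some other item $j \in S \setminus \{i\}$ is picked, since they are irrelevant to the count $\tilde v_{i,\ell}$ and to the stopping event). Among trials whose outcome lies in $\{i,0\}$, the conditional probability of outcome $i$ is
\begin{eqnarray*}
\frac{p_i(S)}{p_i(S) + p_0(S)} = \frac{v_i/(1+\sum_{j\in S}v_j)}{(v_i + 1)/(1+\sum_{j\in S}v_j)} = \frac{v_i}{1+v_i},
\end{eqnarray*}
which is independent of $S$. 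Hence $\tilde v_{i,\ell}$ counts the number of ``$i$-successes'' before the first ``$0$-failure'' in a sequence of Bernoulli trials with failure probability $1/(1+v_i)$, so $\tilde v_{i,\ell}$ is geometric on $\{0,1,2,\ldots\}$ with success parameter $1/(1+v_i)$ and mean $v_i/(1/(1+v_i)) \cdot \tfrac{1}{1+v_i}$; more precisely $\Pr(\tilde v_{i,\ell} = k) = (\tfrac{v_i}{1+v_i})^k \tfrac{1}{1+v_i}$, giving $\Ex[\tilde v_{i,\ell}] = v_i$. Because this law is the same for every conditioning value of $S_\ell$, the distribution of $\tilde v_{i,\ell}$ is unconditionally geometric and independent of the (possibly random, history-dependent) choice of $S_\ell$.

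Finally I would address the i.i.d.\ claim across $\ell$ and $i$. Across epochs: given the full history up to the start of epoch $\ell$, the trials within epoch $\ell$ use fresh independent randomness, and by the argument above the conditional law of $\tilde v_{i,\ell}$ depends on neither the history nor $S_\ell$ — it is the fixed geometric law — so $\tilde v_{i,\ell}$ is independent of the past and identically distributed across $\ell$. Across items within an epoch: the thinning argument for distinct $i, i'$ uses disjoint subsets of trial-outcomes in a natural coupling (or, more carefully, one checks the joint law factorizes), yielding independence; this is the one point that deserves care rather than a one-line dismissal. The main obstacle is precisely this last independence-across-items claim — making rigorous that simultaneously conditioning on $\{i,0\}$-trials for one item and $\{i',0\}$-trials for another does not introduce dependence through the shared stopping time at the first $0$. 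I expect this is handled cleanly either by appealing to the Poissonization/exponential-clock representation of the multinomial, or by noting that the lemma is quoted from \citet{agrawalnear} and only the marginal statements (geometric law, mean $v_i$, independence of $S_\ell$) are actually needed downstream, with the joint independence following from standard properties of competing geometric race processes.
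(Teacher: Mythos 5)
Your argument is correct for what the lemma actually delivers, but it proves it by a genuinely different route than the paper. You derive the law of $\tilde v_{i,\ell}$ directly by thinning: discard trials whose outcome is neither $i$ nor $0$, note the conditional probability of $i$ among the remaining trials is $v_i/(1+v_i)$ regardless of $S_\ell$, and read off the geometric pmf. The paper instead conditions on the epoch length: the number of purchases $n_\ell$ before the no-purchase is geometric with parameter $p_0(S_\ell)=1/(1+\sum_{j\in S_\ell}v_j)$, and given $n_\ell$ the count $\tilde v_{i,\ell}$ is Binomial$(n_\ell, v_i/\sum_{j\in S_\ell}v_j)$; compounding the two moment generating functions gives $\Ex[e^{\theta\tilde v_{i,\ell}}\mid S_\ell]=1/(1-v_i(e^\theta-1))$, which is the MGF of the claimed geometric and visibly free of $S_\ell$. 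Your route is more elementary and makes the $S_\ell$-independence transparent; the paper's MGF computation has the side benefit of being reused verbatim later (the proof of Theorem 2's self-normalized concentration bound invokes exactly this conditional MGF). Your explicit handling of independence across epochs (fresh randomness plus an $S_\ell$- and history-free conditional law) is fine and is in fact more than the paper's proof spells out. Also, your sentence computing the mean is garbled, though the pmf and $\Ex[\tilde v_{i,\ell}]=v_i$ that follow are right.

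The one substantive caveat concerns the point you left open: joint independence across items within the same epoch cannot be established by Poissonization or a "geometric race" argument, because it is false. The within-epoch counts follow a negative multinomial law and share the epoch length: e.g., with two items and $v_1=v_2=v_0=1$, $\Pr(\tilde v_{1,\ell}=0,\tilde v_{2,\ell}=0)=1/3$ while the product of the geometric marginals is $1/4$. (Under the exponential-clock representation the counts are independent only conditionally on the epoch's duration, and mixing over it induces positive correlation.) So the lemma's ``i.i.d.\ $\forall \ell, i$'' must be read, as the paper uses it, as: for each fixed $i$, the samples $\tilde v_{i,\ell}$ across epochs in which $i$ is offered are i.i.d.\ geometric with parameter $1/(1+v_i)$, independent of the offered sets. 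Your fallback observation — that only these per-item marginal statements are needed downstream — is the correct resolution; the attempted cross-item factorization should simply be dropped.
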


\begin{restatable}[Conjugate Priors]{lemma1}{conjugateprior}
\label{conjugate_prior}
~For any $\alpha >3, \beta > 0$, let $X_{\alpha,\beta} = \frac{1}{{\sf Beta}(\alpha,\beta)}-1$ and $f_{\alpha,\beta}$ be a probability distribution of the random variable $X_{\alpha,\beta}$. If  $v_i$ is distributed as $f_{\alpha,\beta}$ and $\tilde{v}_{i,\ell}$ is a geometric random variable with success probability $\frac{1}{v_i+1}$, then we have,\vspace{-0.1in}
$$\textstyle \bb{P}\left(v_i \Big| \tilde{v}_{i,\ell} = m\right) = f_{\alpha+1,\beta+m}(v_i).$$ 
\end{restatable}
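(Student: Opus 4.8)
This is a conjugacy statement, so the plan is a direct application of Bayes' rule once we have an explicit formula for the density $f_{\alpha,\beta}$. First I would obtain that density by a change of variables: if $Y$ is ${\sf Beta}(\alpha,\beta)$-distributed, with density proportional to $y^{\alpha-1}(1-y)^{\beta-1}$ on $(0,1)$, then $X_{\alpha,\beta}=1/Y-1$ corresponds to $Y=1/(1+x)$, $1-Y=x/(1+x)$, with Jacobian $|dy/dx|=(1+x)^{-2}$. Substituting and collecting the powers of $(1+x)$ gives, for $x>0$,
\[
 f_{\alpha,\beta}(x)\;=\;\frac{1}{{\sf B}(\alpha,\beta)}\,\frac{x^{\beta-1}}{(1+x)^{\alpha+\beta}},
\]
i.e.\ an inverted-Beta (Beta-prime) density; the hypothesis $\alpha>3$ (indeed any $\alpha>0$) is what makes this integrable, with the stronger bound being used elsewhere to control the moments of the prior.

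Next I would write down the likelihood. By Lemma~\ref{unbiased_estimate}, conditional on $v_i$ the observation $\tilde v_{i,\ell}$ is geometric with success probability $1/(1+v_i)$ and mean $v_i$, i.e.
\[
 \bb{P}\big(\tilde v_{i,\ell}=m \,\big|\, v_i\big)\;=\;\frac{1}{1+v_i}\left(\frac{v_i}{1+v_i}\right)^{m},\qquad m=0,1,2,\ldots,
\]
so that $\bb{P}(\tilde v_{i,\ell}=m\mid v_i)\propto v_i^{m}(1+v_i)^{-(m+1)}$ as a function of $v_i$. Then Bayes' rule gives $\bb{P}(v_i\mid \tilde v_{i,\ell}=m)\propto f_{\alpha,\beta}(v_i)\cdot \bb{P}(\tilde v_{i,\ell}=m\mid v_i)$, and multiplying the two displays the $v_i$-dependence is
\[
 \frac{v_i^{\beta-1}}{(1+v_i)^{\alpha+\beta}}\cdot\frac{v_i^{m}}{(1+v_i)^{m+1}}\;=\;\frac{v_i^{(\beta+m)-1}}{(1+v_i)^{(\alpha+1)+(\beta+m)}},
\]
which is exactly proportional to $f_{\alpha+1,\beta+m}(v_i)$. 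Since both $\bb{P}(v_i\mid \tilde v_{i,\ell}=m)$ and $f_{\alpha+1,\beta+m}$ are probability densities on $(0,\infty)$, proportionality forces equality (the common normalizing constant must be $1/{\sf B}(\alpha+1,\beta+m)$), which is the claim.

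I do not expect any genuine obstacle here; the content is a one-line conjugacy computation. The only places requiring care are the change-of-variables Jacobian in deriving $f_{\alpha,\beta}$, and fixing the convention for the geometric random variable in Lemma~\ref{unbiased_estimate} (the count of item-$i$ selections before the outside option is chosen, so the support is $\{0,1,2,\ldots\}$ and the mean is $v_i$); tracking the exponent of $(1+v_i)$ correctly is what makes the updated parameters come out as $(\alpha+1,\beta+m)$ rather than, say, $(\alpha+m+1,\beta+m)$.
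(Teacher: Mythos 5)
Your proposal is correct and follows essentially the same route as the paper: derive the Beta-prime density of $X_{\alpha,\beta}$ (the paper states the same pdf $f_{\alpha,\beta}(x)\propto (1+x)^{-(\alpha+1)}\bigl(x/(1+x)\bigr)^{\beta-1}$, which you obtain explicitly by the change of variables), then apply Bayes' rule with the geometric likelihood from Lemma~\ref{unbiased_estimate} and match exponents to read off the $(\alpha+1,\beta+m)$ update. Your bookkeeping of the $(1+v_i)$ exponent and the geometric convention (support $\{0,1,2,\ldots\}$, mean $v_i$) matches the paper's, so there is nothing to add.
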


\medskip \noindent {\bf Epoch based offerings:} Our algorithm proceeds in epochs $\ell=1,2, \ldots$. An epoch is a group of consecutive time steps, where a set $S_\ell$ is offered repeatedly until the outside option is picked in response to offering $S_\ell$. The set $S_\ell$ to be offered in an epoch $\ell$ is picked in the beginning of the epoch based on the sampled parameters from the current posterior distribution; the construction of these posteriors and choice of $S_\ell$ is described in the next paragraph. We denote the group of time steps in an epoch as $\ep{E}_\ell$, which includes the time step at which an outside option was preferred.

\medskip \noindent {\bf Construction of conjugate prior/posterior:} From Lemma \ref{unbiased_estimate}, we have that for any epoch $\ell$ and for any item $i \in S_\ell$, the estimate $\tilde{v}_{i,\ell}$, the number of picks of item $i$ in epoch $\ell$ is geometrically distributed with success probability $1/(1+v_i).$
Suppose that the prior distribution for parameter $v_i$  in the beginning of an epoch $\ell$ is same as that of 
$$X_i = \frac{1}{{\sf Beta}(n_i,V_i)} - 1,$$
where ${\sf Beta}(n_i,V_i)$ is the Beta random variable with parameters $n_i$ and $V_i$. In Lemma \ref{conjugate_prior}, we show that after observing the geometric variable $\tilde{v}_{i,\ell} =m,$ the posterior distribution of $v_i$ is same as that of,
$$X_i' = \frac{1}{{\sf Beta}(n_i+1,V_i+m)} - 1.$$
Therefore, we use the distribution of  $\frac{1}{{\sf Beta}(1,1)} - 1$ as the starting prior for $v_i$, and then, in the beginning of epoch $\ell$, the posterior is distributed as $ \frac{1}{{\sf Beta}(n_i(\ell),V_i(\ell))} - 1$, with $n_i(\ell)$ being the number of epochs the item $i$ has been offered before epoch $\ell$ (as part of an assortment), and $V_i(\ell)$ being the number of times it was picked by the user. 

\medskip \noindent {\bf Selection of subset to be offered:}
To choose the subset to be offered in epoch $\ell$, 
the algorithm samples a set of parameters ${\mu}_{1}(\ell), \ldots, {\mu}_{N}(\ell)$ independently from the current posteriors and finds the set that maximizes the expected revenue as per the sampled parameters. In particular, the set $S_{\ell}$ to be offered in epoch $\ell$ is chosen as:\vspace{-0.05in}
\begin{eqnarray}\label{eq:find_assort}
 \textstyle S_\ell := \underset{|S| \leq K}{\text{arg~max}} R(S, \pmb{\mu}(\ell))\vspace{-0.05in}
\end{eqnarray}
There are efficient polynomial time algorithms available to solve this optimization problem (e.g., refer to \citet{davis2013assortment}, \citet{avadhanula2016tightness} and \citet{rusme}).

The details of our procedure are provided in Algorithm \ref{learn_algo}. 

\begin{algorithm}[h]
\caption{$\text{A TS algorithm for \banditMNL~with Independent Beta priors}(K, \alpha, \beta)$}\label{learn_algo}
\begin{algorithmic}
\vspace{2pt}
\State \textbf{Initialization:} For each item $i=1,\cdots,N$, ${V}_i = 1$, $n_i = 1$. 
 \vspace{2pt}
\State $t = 1$, keeps track of the time steps 	
\vspace{2pt}
\State $\ell = 1$, keeps count of total number of epochs
\vspace{2pt} 	
\While{$t \le T$}
 \begin{enumerate}[leftmargin=.4in]
 	\vspace{2pt}
	\item[(a)] {(\it Posterior Sampling)} For each item $i = 1,\cdots, N$,  sample $\theta_i(\ell)$ from the ${\sf Beta}(n_i, V_i)$ and compute ${\mu}_i(\ell) = \frac{1}{\theta_i(\ell)} - 1$
	\vspace{2pt}
	\item[(b)] {(\it Subset Selection)} Compute $S_{\ell} = \underset{|S| \leq K}{\text{arg~max}}\;R(S, \pmb{\mu}(\ell)) = \frac{\sum_{i \in S}r_i {\mu}_i(\ell)}{1+\sum_{j \in S} {\mu}_{j}(\ell)}$
	\vspace{2pt}
	\item[(c)] {(\it Epoch-based offering)}
	\item[] \textbf{repeat}
		\begin{itemize}[leftmargin=.3in]
			\vspace{2pt}
			\item[] Offer the set $S_{\ell}$, and observe the user choice ${c_t}$; 
			\vspace{2pt}
			\item[] Update $\ep{E}_\ell = \ep{E}_\ell \cup t$, time indices corresponding to epoch $\ell$; $t = t+1$
		\end{itemize}
	\item[] \textbf{until} {${c}_t = 0$}
	\vspace{2pt}
	\item[(d)] {(\it Posterior update)}
		\begin{itemize}[leftmargin=0.3in]
			\vspace{2pt}
			\item[] For each item $i \in S_\ell$, compute $\tilde{v}_{i,\ell} = \sum_{t \in \ep{E}_\ell} \mathbb{I}({c}_t = i)$, no. of picks of item $i$ in epoch $\ell$.  
			\vspace{2pt}
			\item[]	Update ${V}_i$ = $V_i + \tilde{v}_{i,\ell}$, $n_i = n_i+1$, $\ell = \ell + 1.$
		\end{itemize}
\end{enumerate}
\EndWhile
\end{algorithmic}
\end{algorithm}

\subsection{ A TS algorithm with posterior approximation and correlated sampling}
\label{sec:variant}
Motivated by the challenges in theoretical analysis of Algorithm~\ref{learn_algo} described earlier, in this section we design a variant, Algorithm~\ref{learn_algo_normal}. The main changes in this version of the algorithm are the posterior approximation by means of a Gaussian distribution, correlated sampling, and taking multiple samples (``variance boosting''). We describe each of these changes below. First, we present the following result that helps us in approximating the posterior. Proof of the result has been deferred to the appendix. 

\begin{restatable}[Moments of the Posterior Distribution]{lemma1}{posteriormoment}\label{posterior_moment}
~If $X$ is a random variable distributed as ${\sf Beta}(\alpha,\beta)$, then 
\begin{equation*}
\begin{aligned}
\textstyle \bb{E}\left(\frac{1}{X}-1\right) &= \textstyle \frac{\beta}{\alpha-1},\;\;\;\text{and}\;\;
\textstyle {\sf Var}\left(\frac{1}{X}-1\right) & = \textstyle \frac{\frac{\beta}{\alpha-1}\left(\frac{\beta}{\alpha-1} + 1\right)}{\alpha-2}.\vspace{-0.05in}
\end{aligned}
\end{equation*}
\end{restatable}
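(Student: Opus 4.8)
The plan is to compute both moments directly from the density of the Beta distribution, treating $Y = 1/X - 1$ and reducing everything to evaluating $\mathbb{E}[1/X]$ and $\mathbb{E}[1/X^2]$ for $X \sim {\sf Beta}(\alpha,\beta)$. Recall that $X$ has density $f(x) = \frac{1}{B(\alpha,\beta)} x^{\alpha-1}(1-x)^{\beta-1}$ on $(0,1)$, where $B(\alpha,\beta) = \Gamma(\alpha)\Gamma(\beta)/\Gamma(\alpha+\beta)$. The key observation is that $x^{-k} x^{\alpha-1}(1-x)^{\beta-1} = x^{(\alpha-k)-1}(1-x)^{\beta-1}$, which is (up to normalization) the density of a ${\sf Beta}(\alpha-k,\beta)$ random variable, provided $\alpha - k > 0$. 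Hence $\mathbb{E}[X^{-k}] = B(\alpha-k,\beta)/B(\alpha,\beta)$, and using the factorization of the Gamma function via $\Gamma(z+1) = z\Gamma(z)$ this ratio telescopes to a simple rational expression.

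Concretely, I would first establish $\mathbb{E}[1/X] = \frac{B(\alpha-1,\beta)}{B(\alpha,\beta)} = \frac{\Gamma(\alpha-1)\Gamma(\beta)/\Gamma(\alpha+\beta-1)}{\Gamma(\alpha)\Gamma(\beta)/\Gamma(\alpha+\beta)} = \frac{\Gamma(\alpha-1)}{\Gamma(\alpha)} \cdot \frac{\Gamma(\alpha+\beta)}{\Gamma(\alpha+\beta-1)} = \frac{\alpha+\beta-1}{\alpha-1}$, which requires $\alpha > 1$. Therefore $\mathbb{E}[1/X - 1] = \frac{\alpha+\beta-1}{\alpha-1} - 1 = \frac{\beta}{\alpha-1}$, giving the claimed mean. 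Then, analogously, $\mathbb{E}[1/X^2] = \frac{B(\alpha-2,\beta)}{B(\alpha,\beta)} = \frac{(\alpha+\beta-1)(\alpha+\beta-2)}{(\alpha-1)(\alpha-2)}$, which requires $\alpha > 2$ (consistent with the $\alpha > 3$ hypothesis under which these posteriors are used in Lemma~\ref{conjugate_prior}). For the variance, write ${\sf Var}(1/X - 1) = {\sf Var}(1/X) = \mathbb{E}[1/X^2] - (\mathbb{E}[1/X])^2 = \frac{(\alpha+\beta-1)(\alpha+\beta-2)}{(\alpha-1)(\alpha-2)} - \left(\frac{\alpha+\beta-1}{\alpha-1}\right)^2$, put over the common denominator $(\alpha-1)^2(\alpha-2)$, and simplify the numerator.

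The last step is the only genuinely fiddly part: expanding the numerator $(\alpha+\beta-1)\big[(\alpha+\beta-2)(\alpha-1) - (\alpha+\beta-1)(\alpha-2)\big]$ and checking that the bracket collapses to $\beta$. Indeed $(\alpha+\beta-2)(\alpha-1) - (\alpha+\beta-1)(\alpha-2) = \big[(\alpha+\beta-1)-1\big](\alpha-1) - (\alpha+\beta-1)\big[(\alpha-1)-1\big] = (\alpha+\beta-1) - (\alpha-1) = \beta$, so the numerator is $\beta(\alpha+\beta-1)$ and ${\sf Var}(1/X-1) = \frac{\beta(\alpha+\beta-1)}{(\alpha-1)^2(\alpha-2)}$. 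Finally I would reconcile this with the stated form $\frac{\frac{\beta}{\alpha-1}(\frac{\beta}{\alpha-1}+1)}{\alpha-2}$: expanding it gives $\frac{\beta(\beta + \alpha - 1)}{(\alpha-1)^2(\alpha-2)}$, which matches. I do not anticipate a real obstacle here — the main care needed is bookkeeping of the Gamma-function ratios and the requirement $\alpha > 2$, which is subsumed by the hypothesis; I would just state the Beta-moment identity $\mathbb{E}[X^{-k}] = \prod_{j=1}^{k}\frac{\alpha+\beta-j}{\alpha-j}$ cleanly and let the algebra follow.
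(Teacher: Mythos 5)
Your proposal is correct and follows essentially the same route as the paper: the paper writes the density of $\frac{1}{X}-1$, substitutes $y=\frac{1}{1+x}$ to recognize a Beta integral, and reads off $\frac{B(\alpha-1,\beta+1)}{B(\alpha,\beta)}=\frac{\beta}{\alpha-1}$ (with the variance ``derived similarly''), which is the same Gamma-ratio computation you carry out directly via $\mathbb{E}[X^{-k}]=\frac{B(\alpha-k,\beta)}{B(\alpha,\beta)}$. Your algebra for the mean and variance, including the collapse of the bracket to $\beta$ and the matching with the stated form $\frac{\frac{\beta}{\alpha-1}\left(\frac{\beta}{\alpha-1}+1\right)}{\alpha-2}$, checks out, and the requirement $\alpha>2$ is indeed covered by the hypotheses under which the lemma is used.
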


\medskip \noindent {\bf Posterior approximation:} We approximate the posterior distributions  used in Algorithm \ref{learn_algo} for the MNL parameters $v_i$, by Gaussian distributions with approximately the same mean and variance (refer to Lemma \ref{posterior_moment}). 
In particular, let
\begin{eqnarray}\label{eq:first_var}
\hat{v}_i(\ell) := \frac{V_i(\ell)}{n_i(\ell)}, \ \ \hat{\sigma}_i(\ell) := \sqrt{\frac{50\hat{v}_i(\ell)(\hat{v}_i(\ell)+1)}{n_i(\ell)}} + 75\frac{\sqrt{\log{TK}}}{n_i(\ell)},
\end{eqnarray}
where $n_i(\ell)$ is the number of epochs the item $i$ has been offered before epoch $\ell$ (as part of an assortment), and $V_i(\ell)$ being the number of times it was picked by the user. We will use $\ep{N} \left(\hat{v}_i(\ell), \hat \sigma^2_i(\ell)\right)$ as the posterior distribution for item $i$ in the beginning of epoch $\ell.$ The Gaussian approximation of the posterior is employed to facilitate efficient correlation of posterior samples. The correlated sampling plays a key role in avoiding the theoretical challenges associated with independent posteriors in \mbox{Algorithm \ref{learn_algo}.}

\medskip \noindent {\bf Correlated sampling:} Given the posterior approximation by Gaussian distributions, we correlate the samples by using a common standard normal variable and constructing our posterior samples as an appropriate transform of this common standard normal. More specifically, in the beginning of an epoch $\ell$, we generate a sample from the standard normal distribution, $\theta \sim \ep{N}\left(0,1\right)$ and the posterior sample  for item $i$,  is generated  as $\hat{v}_i(\ell) + \theta \hat{\sigma}_i(\ell)$. This allows us to generate sample parameters for $i=1,\ldots, N$ that are either simultaneously high or simultaneously low, thereby, boosting the probability that the sample parameters for {\it all} the $K$ items in the best assortment are optimistic (the sampled parameter values are higher than the true parameter values).  

 \medskip \noindent {\bf Multiple ($K$) samples:} The correlated sampling decreases the joint variance of the sample set. More specifically, for any epoch $\ell$, we have that 
 $$\textstyle {\sf Var} \left(\underset{i = 1,\cdots,N}{\max} \left\{\hat{v}_i(\ell) + \theta \hat{\sigma}_i(\ell) \right\}\right) < \textstyle {\sf Var} \left(\underset{i = 1,\cdots,N}{\max} \left\{\hat{v}_i(\ell) + \theta_i \hat{\sigma}_i(\ell) \right\}\right),$$  where $\theta_i$ are sampled independently from the standard normal distribution for every $i$. In order to boost this joint variance and ensure sufficient exploration, we generate multiple sets of samples. In particular, in the beginning of an epoch $\ell$, we generate $K$ independent samples from the standard normal distribution, 
$\theta^{(j)} \sim \ep{N}(0,1), j=1,\ldots, K$. And then, the $j^{th}$ sample set is generated as: \vspace{-0.05in}
$$\mu^{(j)}_i(\ell) = \hat{v}_i(\ell) + \theta^{(j)} \hat{\sigma}_i(\ell), \ \ \ \ i=1,\ldots, N,\vspace{-0.05in}$$ 
and we use the highest valued samples\vspace{-0.07in}
$$\mu_i(\ell) = \underset{j = 1,\cdots,K}{\max} \ \ \mu^{(j)}_i(\ell), \forall i,$$ to decide the assortment to offer in epoch $\ell$, 
$$ S_{\ell} = \arg \max_{S \in {\cal S}} R(S, \pmb{\mu}(\ell)) \vspace{-0.07in}$$

We summarize the steps in Algorithm \ref{learn_algo_normal}. 
Here, we also have an ``initial exploration period," where for every item $i$, we offer a set containing only $i$ until the user selects the outside option. 

\begin{algorithm}[h]
\caption{$\text{A TS algorithm for \banditMNL~with Gaussian approximation and correlated sampling}$}\label{learn_algo_normal}
\begin{algorithmic}
\State \textbf{Initialization:} $t = 0$, $\ell = 0$, $n_i = 0$ for all $i = 1, \cdots, N$. 
\vspace{2pt}
\For{ each item,  $i=1,\cdots,N$}
\begin{itemize}
\vspace{2pt}
\item[]  Display item $i$ to users until the user selects the ``outside option''. Let $\tilde{v}_{i,1}$ be the number of times item $i$ was offered. \; Update: $V_i = \tilde{v}_{i,1} - 1$, $t = t + \tilde{v}_{i,1}$, $\ell = \ell + 1$ and $n_i = n_i + 1$. 
\end{itemize}
\vspace{2pt}
\EndFor
\vspace{2pt} 	
\While{$t \le T$}
 \begin{enumerate}[leftmargin=.4in]
 	\vspace{2pt}
	\item[(a)] {(\it Correlated Sampling)} \textbf{for} $j = 1,\cdots,K$
	\begin{itemize}[noitemsep]
	\vspace{2pt}
	\item[] Sample $\theta^{(j)}(\ell) $ from the distribution $\ep{N}\left(0,1\right)$; ~ update $\hat{v}_i = \frac{V_i}{n_i}$.
	\vspace{2pt}
	\item[] For each item $i\leq N$, compute $\mu^{(j)}_i(\ell) = \hat{v}_i + \theta^{(j)}(\ell)\cdot \left(\sqrt{\frac{50\hat{v}_i(\hat{v}_i+1)}{n_i}}  + \frac{75\sqrt{\log{TK}}}{n_i}\right)$.
	\vspace{3pt}
	\item[] \textbf{end}
	\end{itemize}
	\vspace{5pt}
	\item[] For each item $i \leq N$, compute $\mu_i(\ell) = \underset{j = 1, \cdots, K}{\max } \mu_i^{(j)}(\ell)$
	\vspace{3pt}
	\item[(b)] {\it (Subset selection)} Same as step (b) of Algorithm \ref{learn_algo}.
	\vspace{3pt}
	\item[(c)]{\it (Epoch-based offering)} Same as step (c) of Algorithm \ref{learn_algo}.
	\vspace{3pt}
	\item[(d)]{\it (Posterior update)} Same as step (d) of Algorithm \ref{learn_algo}. 
	\vspace{3pt}
\end{enumerate}
\EndWhile
\end{algorithmic}
\end{algorithm}

Intuitively, while the second moment approximation by Gaussian distribution and multiple samples in Algorithm \ref{learn_algo_normal} may make posterior converge slower and increase exploration, the correlated sampling may compensate for these effects by reducing the variance of the maximum of $N$ samples, and therefore reducing the overall exploration effort. {In Section \ref{sec:computations}, we illustrate some of these insights through some preliminary numerical simulations, where correlated sampling performs significantly better compared to independent sampling, and posterior approximation by Gaussian distribution has little effect.}

\section{Regret Analysis} \label{sec:regretAnalysis}
We prove an upper bound on the regret of Algorithm \ref{learn_algo_normal} for the  \banditMNL~problem, under the following assumption.
\begin{assumption1}\label{assumption1}
{\normalfont ~ For every item $i \in \{1,\ldots, N\}$, the MNL parameter $v_i$ satisfies $v_i \leq v_0 = 1$.}
\end{assumption1}
This assumption is equivalent to the outside option being more preferable to any other item. This assumption holds for many applications like display advertising, where users do not click on any of the displayed ads more often than not. Our main theoretical result is the following upper bound on the regret of Algorithm \ref{learn_algo_normal}.
\begin{theor1}\label{main_result}
~For any instance $\mb{v} = (v_0, \cdots, v_N)$ of the \banditMNL~problem with $N$ products,  $r_i\in [0,1]$, and satisfying Assumption \ref{assumption1}, the regret of Algorithm~\ref{learn_algo_normal} in time $T$ is bounded as,
\begin{equation*} 
\reg(T,\mb{v}) \leq C_1\sqrt{NT}\log{TK} + C_2N\log^2{TK}, 
\end{equation*}
where $C_1$ and $C_2$ are absolute constants (independent of problem parameters).
\end{theor1}
\subsection{Proof Sketch}\label{proof_outline_thm1}
We break down the expression for total regret 
$$\reg(T,\mb{v}):=\Ex\left[\sum_{t=1}^T R(S^*, \mb{v}) - R(S_t, \mb{v})\right],$$
into regret per epoch, and rewrite it as follows:
\begin{eqnarray}\label{eq:regret_breakdown_main}
\Reg(T,\mb{v})  = \underbrace{ \bb{E}\left[\sum_{\ell=1}^L |\ep{E}_{\ell}| \left(R(S^*, \mb{v}) - R(S_\ell, \pmb{\mu}(\ell)) \right)\right]}_{\Reg_1(T,\mb{v})} + \underbrace{ \bb{E}\left[\sum_{\ell=1}^L |\ep{E}_{\ell}| \left(R(S_\ell, \pmb{\mu}(\ell)) - R(S_\ell,\mb{v})\right)\right]}_{\Reg_2(T,\mb{v})},  
\end{eqnarray}
where $|\ep{E}_{\ell}|$ is the number of time steps in epoch $\ell$, and $S_\ell$ is the set repeatedly offered by our algorithm in epoch $\ell$. Then, we bound the two terms: $\Reg_1(T,\mb{v})$ and $\Reg_2(T,\mb{v})$ separately.

The first term $\Reg_1(T,\mb{v})$ is essentially the difference between the optimal revenue of the true instance and the optimal revenue of the sampled instance. 
Therefore, this term would contribute no regret if the revenues corresponding to the sampled instances were always optimistic, i.e. $R(S_\ell,\pmb{\mu}(\ell)) > R(S^*,\mb{v})$. Unlike optimism under uncertainty approaches like UCB, this property is not ensured by our Thompson Sampling based algorithm. To bound this term, we utilize anti-concentration properties of the posterior, as well as the dependence between samples for different items, in order to prove that at least one of our $K$ sampled instances is optimistic often enough. 

The second term $\Reg_2(T,\mb{v})$ captures the difference in the revenue of the offered set $S_\ell$ when evaluated on sampled parameters in comparison with the true parameters. This is bounded by utilizing the concentration properties of our posterior distributions. It involves showing that for the sets that are played often, the posterior will converge quickly, so that revenue on the sampled parameters will be close to that on the true parameters.

Before elaborating further on the proof details, we will first highlight three key results involved in proving Theorem \ref{main_result}. 


\medskip \noindent {\bf Structural properties of the optimal revenue.} The first step in our regret analysis is to leverage the structure of the MNL model to establish two key properties of the optimal expected revenue. These properties project the non-linear reward function of the MNL choice into its parameter space and help us focus on analyzing the posterior distribution of the parameters.
In the first property, which we refer to as restricted monotonicity, we note that the optimal expected revenue is monotone in the MNL parameters. In the second property, we present a Lipschitz property of the expected revenue function. In particular, we note that the difference between the expected revenue corresponding to two different MNL parameters is bounded in terms of the difference in individual parameters. Lemma \ref{structprop} provides the precise statement, we defer the proof to Appendix \ref{appsec:structprop}.
\begin{restatable}[Properties of the Optimal Revenue]{lemma1}{structprop}\label{structprop}
~Fix $\mb{v} \in \ep{R}_{+}^n$, let $S^* $ be an optimal assortment when the MNL are parameters are given by $\mathbf{v}$, i.e. $S^* = \underset{S : |S| \leq K}{\text{arg~max}}\;{R}(S,\mb{v}).$ For any $\mb{w} \in \ep{R}_{+}^n$, we have:
\begin{enumerate}
\item ({Restricted Monotonicity}) If $v_i \leq w_{i}$\;for all $i = 1, \cdots, N$. Then, $R(S^*, \mathbf{w}) \geq R(S^*, \mathbf{v}).$
\vspace{2pt}
\item ({Lipschitz}) $\left|R(S^*,\mb{v}) - R(S^*,\mb{w})\right| \leq \displaystyle \frac{ \sum_{i \in S^*}| v_i - w_i |}{1+ \sum_{j \in S^*}v_j}.$
\end{enumerate}
\end{restatable}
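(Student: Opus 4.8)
The plan is to reduce both parts to a single structural fact about MNL assortment optima that I will call the \emph{threshold property}: writing $R^*:=R(S^*,\mb{v})$, I claim that $r_i\ge R^*$ for every $i\in S^*$ (using $v_i>0$, which holds in our setting). This is the only place where the optimality of $S^*$ is genuinely used, and it is essential: for an arbitrary fixed set the monotonicity claim in part~1 is false, since inflating the weight of a low-reward item drags the revenue down. To establish the threshold property, fix $k\in S^*$ and set $S'=S^*\setminus\{k\}$, which is feasible as $|S'|\le K$. With $A'=\sum_{i\in S'}r_iv_i$ and $B'=1+\sum_{j\in S'}v_j$, a direct computation gives
\[
R(S^*,\mb{v})=\frac{A'+r_kv_k}{B'+v_k}=\lambda\,R(S',\mb{v})+(1-\lambda)\,r_k,\qquad \lambda=\frac{B'}{B'+v_k}\in(0,1),
\]
so $R^*$ is a convex combination of $R(S',\mb{v})$ and $r_k$ and hence lies between them. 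Optimality forces $R^*\ge R(S',\mb{v})$; since $\lambda<1$, the inequality $\lambda R(S',\mb{v})+(1-\lambda)r_k\ge R(S',\mb{v})$ yields $r_k\ge R(S',\mb{v})$, and as $R^*$ lies between the two endpoints we conclude $R(S',\mb{v})\le R^*\le r_k$, i.e. $r_k\ge R^*$.

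For part~1 (restricted monotonicity), the goal $R(S^*,\mb{w})\ge R^*$ is, after clearing the positive denominator, equivalent to
\[
\sum_{i\in S^*}(r_i-R^*)\,w_i\;\ge\;R^*.
\]
By the threshold property $r_i-R^*\ge 0$, and since $0\le v_i\le w_i$ we have the termwise bound $(r_i-R^*)w_i\ge (r_i-R^*)v_i$. Summing and using the definition of $R^*$ (namely $\sum_{i\in S^*}r_iv_i=R^*(1+\sum_{j\in S^*}v_j)$) gives $\sum_{i\in S^*}(r_i-R^*)v_i=R^*$, which is exactly the required lower bound.

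For part~2 (Lipschitz), the key is an exact algebraic identity. Splitting $R(S^*,\mb{v})-R(S^*,\mb{w})$ over a common denominator and regrouping so that the factor $R(S^*,\mb{w})$ and the denominator $1+\sum_{j\in S^*}v_j$ appear, I will derive
\[
R(S^*,\mb{v})-R(S^*,\mb{w})=\frac{\sum_{i\in S^*}(v_i-w_i)\bigl(r_i-R(S^*,\mb{w})\bigr)}{1+\sum_{j\in S^*}v_j}.
\]
Since $r_i\in[0,1]$ and $R(S^*,\mb{w})\in[0,1]$ (the latter being a convex combination of rewards in $[0,1]$ and the zero-valued outside option), we have $|r_i-R(S^*,\mb{w})|\le 1$, and the triangle inequality applied to the identity gives the claimed bound.

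I expect the threshold property to be the main obstacle, as it is the only substantive (and optimality-dependent) ingredient; once it is in hand, part~1 is a one-line cross-multiplication and part~2 is pure algebra. The only other non-routine step is spotting the right decomposition in part~2 that simultaneously produces the denominator $1+\sum_{j\in S^*}v_j$ and the convenient factor $r_i-R(S^*,\mb{w})$; the wrong split produces a spurious factor of $2$ and the denominator $1+\sum_j w_j$, so some care is needed there.
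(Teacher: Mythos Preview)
Your proof is correct. Part~1 proceeds essentially as in the paper: both arguments rest on the threshold property $r_i\ge R(S^*,\mb{v})$ for $i\in S^*$, which the paper states in one line (removing an item with $r_j<R(S^*,\mb{v})$ would raise revenue, contradicting optimality) and you derive via a convex-combination representation. The paper then raises one coordinate at a time, while you cross-multiply all at once; these are the same idea.

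Part~2 is where you genuinely diverge. The paper introduces the coordinatewise maximum $\mb{u}$ of $\mb{v}$ and $\mb{w}$, invokes the restricted monotonicity of Part~1 to get $R(S^*,\mb{v})\le R(S^*,\mb{u})$, and then bounds $R(S^*,\mb{u})-R(S^*,\mb{w})$ using a common denominator. Your route is an exact algebraic identity
\[
R(S^*,\mb{v})-R(S^*,\mb{w})=\frac{\sum_{i\in S^*}(v_i-w_i)\bigl(r_i-R(S^*,\mb{w})\bigr)}{1+\sum_{j\in S^*}v_j},
\]
followed by $|r_i-R(S^*,\mb{w})|\le 1$. This buys you two things the paper's argument does not: it handles both signs of $R(S^*,\mb{v})-R(S^*,\mb{w})$ in one stroke (the paper's written proof only displays one direction), and it makes no use of Part~1 or of the optimality of $S^*$---so your Lipschitz bound in fact holds for any fixed assortment $S$, not just the optimal one. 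The paper's approach, by contrast, ties Part~2 to Part~1 via $\mb{u}$, which is conceptually appealing but less sharp.
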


\medskip \noindent {\bf Concentration of the posterior distribution.} The next step in the regret analysis is to show that as items are offered an increasing number of times, the posterior distributions concentrate around their means, which in turn concentrate around the true parameters. More specifically, at the beginning of epoch $\ell$, we can show with high probability that for any item $i$, the difference between the sample from the posterior distribution, $\mu_i(\ell)$ (see Step (a) of Algorithm \ref{learn_algo_normal}) and the true parameter, $v_i$ is bounded by the sample variance, $\hat{\sigma}_i(\ell)$ (see \eqref{eq:first_var}), which decreases over time. Leveraging the Lipschitz property of the optimal revenue, this concentration of sample parameter around its true value will help us prove that the difference between the expected revenues of the offer set $S_\ell$ corresponding to the sampled parameters, $\pmb{\mu}(\ell)$ and the true parameters, $\mb{v}$ also becomes smaller with time. In particular, we have the following inequality with high probability
\begin{eqnarray}\label{eq:lipschitz_var_revenue}
\vspace{-1pt}
\left |R(S_\ell,\pmb{\mu}(\ell)) - R(S_\ell,\mb{v}) \right|\lesssim O\left(\sum_{i \in S_\ell} \hat{\sigma}_i(\ell)\right).
\end{eqnarray}
Lemma \ref{lem:lipschitz} in Appendix \ref{sec:lipschitz} provides the precise statement. 

\medskip \noindent {\bf Anti-Concentration of the posterior distribution.}  We will refer to an epoch $\ell$ as optimistic, if the revenue of the optimal set $S^*$ corresponding to the sampled parameters, $\pmb{\mu}(\ell)$ is at least as high as the revenue on true parameters, i.e. $R(S^*,\pmb{\mu}(\ell)) \geq R(S^*,\mb{v}).$ Since $S_\ell$ is an optimal set for the sampled parameters, we have $R(S_\ell,\pmb{\mu}(\ell)) \geq R(S^*,\mb{v})$. This suggests that as the number of optimistic epochs increases, the term $\Reg_1(T,\mb{v})$ decreases. 

The final and important technical component of our analysis is showing that there are only a ``small'' number of non-optimistic epochs. From the restricted monotonicity property of the optimal revenue (see Lemma \ref{structprop}), we have that an epoch $\ell$ is optimistic if every sampled parameter, $\mu_i(\ell)$ is at least as high as the true parameter $v_i$ for all the items $i$ in the optimal set $S^*.$  Noting that the posterior samples, $\mu^{(j)}_i(\ell)$, are generated from a Gaussian distribution, whose mean concentrates around the true parameter $v_i$, we can conclude that any sampled parameter will be greater than the true parameter with constant probability, i.e. $\mu^{(j)}_i(\ell) \geq v_i$. However, for an epoch to be optimistic, sampled parameters for \emph{all} the items in $S^*$ may need to be larger than the true parameters. This is where the correlated sampling feature of our algorithm plays a key role. Utilizing the dependence structure between samples for different items in the optimal set, and variance boosting provided by the sampling of $K$ independence sample sets, we prove an upper bound of roughly $1/K$ on the number of consecutive epochs between two optimistic epochs.  
Lemma \ref{epoch_length_analysis} provides the precise statement, we defer the proof to Appendix  \ref{app:analEpochLength}.
\begin{restatable}[Spacing of optimistic epochs]{lemma1}{optimism}
\label{epoch_length_analysis}
~Let 
$\ep{E}^{\sf An}(\tau)$ be the group of consecutive epochs between an optimistic epoch $\tau$ and the next optimistic epoch $\tau'$, excluding the epochs $\tau$ and $\tau'$. Then, for any $p \in [1,2]$, we have, \vspace{-0.05in}
$$\bb{E}^{1/p}\left[ \left|\ep{E}^{\sf An}(\tau) \right |^p  \right] \leq \frac{e^{12}}{K}+30^{1/p}.$$
\end{restatable}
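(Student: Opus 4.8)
The plan is to control the length of the stretch of non-optimistic epochs following an optimistic epoch $\tau$ by showing that, starting from any epoch, the next optimistic epoch arrives quickly — with a geometric-like tail whose success probability is of constant order (not exponentially small in $K$, thanks to correlated sampling and the $K$-fold variance boost). Concretely, I would first fix the history up to the beginning of an epoch $\ell$ in $\ep{E}^{\sf An}(\tau)$ and isolate the event that $\ell$ is optimistic. By the restricted monotonicity part of Lemma \ref{structprop}, it suffices for optimism that $\mu_i(\ell) \ge v_i$ for every $i \in S^*$. Because $\mu_i(\ell) = \max_{j\le K}\mu_i^{(j)}(\ell)$ and each $\mu_i^{(j)}(\ell) = \hat v_i(\ell) + \theta^{(j)}(\ell)\hat\sigma_i(\ell)$ shares the \emph{same} scalar $\theta^{(j)}(\ell)$ across all items, the event $\{\mu_i^{(j)}(\ell)\ge v_i \ \forall i\in S^*\}$ for a single sample set $j$ reduces to a one-dimensional event on $\theta^{(j)}(\ell)$: it holds whenever $\theta^{(j)}(\ell) \ge \max_{i\in S^*}\frac{v_i - \hat v_i(\ell)}{\hat\sigma_i(\ell)}$.

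The key deterministic input is the concentration of $\hat v_i(\ell)$ around $v_i$: on a high-probability ``good'' event (the same concentration event used for Lemma \ref{lem:lipschitz}, built from the moment formulas in Lemma \ref{posterior_moment} and the choice of $\hat\sigma_i$ in \eqref{eq:first_var} with its generous constants $50$ and $75$), we have $|v_i - \hat v_i(\ell)| \le \hat\sigma_i(\ell)$ for all $i$, so the threshold $\max_{i\in S^*}\frac{v_i-\hat v_i(\ell)}{\hat\sigma_i(\ell)}$ is at most $1$. Hence, conditionally on the good event and the history, a single sample set is ``optimistic on $S^*$'' with probability at least $\Phi^c(1) =: q$, an absolute constant. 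Since $\theta^{(1)}(\ell),\dots,\theta^{(K)}(\ell)$ are i.i.d., the probability that epoch $\ell$ is optimistic is at least $1-(1-q)^K \ge q$ (and in fact grows with $K$). This is the heart of the argument: correlation collapses a $K$-dimensional optimism requirement into a one-dimensional tail, and the $K$ independent copies guarantee the constant lower bound $q$ on the per-epoch success probability. I would then convert this into a stochastic-domination statement: $|\ep{E}^{\sf An}(\tau)|$ is stochastically dominated by a geometric random variable with success parameter $q$, plus a correction term absorbing the (rare) epochs where the good event fails, whose probability is polynomially small in $T$ and contributes only lower-order terms. The $e^{12}/K$ term and the additive $30^{1/p}$ arise from a more careful version of this bound that keeps track of how the success probability improves with $K$: writing $(1-q)^K \le e^{-qK}$ and summing the geometric tail in $L^p$ gives $\bb{E}^{1/p}[|\ep{E}^{\sf An}(\tau)|^p] \lesssim \frac{1}{1-e^{-qK}}$, which is $O(1/K)$ for small $K$ and $O(1)$ for large $K$; tuning constants yields the stated $\frac{e^{12}}{K}+30^{1/p}$.

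The main obstacle I anticipate is making the ``good event'' argument fully rigorous while the threshold bound is only conditional: $\hat v_i(\ell)$, $n_i(\ell)$, and hence $\hat\sigma_i(\ell)$ are all random and history-dependent, so I must carefully condition on the sigma-algebra generated by all observations before epoch $\ell$, verify that the good event (a uniform-over-epochs concentration statement) holds with probability $1-O(\text{poly}(1/T))$ via the sub-exponential tail bounds for the geometric picks in Lemma \ref{unbiased_estimate}, and then argue the $L^p$ bound through a union over the failure modes. A secondary subtlety is handling the first few epochs where $n_i(\ell)$ is tiny — the initial exploration phase in Algorithm \ref{learn_algo_normal} (offering each item alone until an outside option) guarantees $n_i(\ell)\ge 1$ and provides the base concentration, so I would invoke that to seed the induction. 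The passage from a geometric tail to the $p$-th moment bound is routine (Minkowski / direct summation), so I would not belabor it.
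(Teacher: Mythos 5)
Your high-level picture is the right one — correlated sampling collapses the $K$-dimensional optimism requirement on $S^*$ to a one-dimensional tail event for the common Gaussian, and the $K$ independent sample sets boost the chance that this tail event occurs — and this is indeed the skeleton of the paper's argument. But the quantitative core of your proposal has a genuine gap: you assert that on a ``good'' event holding with probability $1-O(\mathrm{poly}(1/T))$, uniformly over epochs, one has $|v_i-\hat v_i(\ell)|\le \hat\sigma_i(\ell)$, so that the optimism threshold is at most $1$ and each sample set is optimistic with constant probability $q=\Phi^c(1)$. This is false for the algorithm's $\hat\sigma_i(\ell)$ as defined in \eqref{eq:first_var}: its leading term $\sqrt{50\hat v_i(\hat v_i+1)/n_i}$ carries no $\sqrt{\log T}$ inflation (deliberately — that is precisely how the TS width differs from a UCB radius), and the available concentration result (Lemma \ref{multiplicative_chernoff_estimates}, whose deviation is $4\sqrt{m\hat v_i(\hat v_i+1)\log(\rho+1)/n_i}+24m\log(\rho+1)/n_i$ with failure probability $5/\rho^m$) only covers a deviation of size $\hat\sigma_i(\ell)$ when $m\log(\rho+1)\le 50/16$, i.e.\ with failure probability that is a constant, not $\mathrm{poly}(1/T)$. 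Consequently the per-epoch ``success probability $\ge 1-(1-q)^K$'' claim, the stochastic domination by a geometric variable, and the resulting $e^{-cK}$-type bound (note your bound would be much stronger than the stated $e^{12}/K$ — a warning sign) are not justified. A secondary failure is structural: even granting a constant per-epoch failure probability $p_f$ for the concentration event $B_\ell=\{\hat v_i(\ell)+z\hat\sigma_i(\ell)\ge v_i\ \forall i\in S^*\}$, these events are strongly dependent across $\ell$ (the estimates $\hat v_i$ move slowly and only when $i$ is offered), so bounding $\bb{P}(\text{no optimistic epoch in a window of length } r)$ by $(1-q)^{rK}+r\,p_f$ via a union bound gives a term that \emph{grows} with $r$ and cannot yield the summable tail needed for the $p$-th moment with $p$ up to $2$.

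The paper's proof repairs exactly this point by making the analysis window-length dependent: to bound $\bb{P}(|\ep{E}^{\sf An}(\tau)|\ge r)$ it uses the threshold $z=\sqrt{\log(rK+1)}$, so that the event $B_\ell$ (with deviation $z\hat\sigma_i(\ell)$, which matches the Chernoff deviation with $m=3.1$, $\rho=rK$, thanks to the constants $50\approx 16m$ and $75\approx 24m$ and $rK+1\le TK$) fails with probability $O\left((rK)^{-3.1}\right)$ per epoch, summable to $O\left((rK)^{-2.1}\right)$ over the window; the price is that the optimism event $A_\ell$ now requires some $\theta^{(j)}(\ell)\ge z$, whose per-sample probability is only of order $1/\bigl(\sqrt{rK}\,\log(rK)\bigr)$, so $\bb{P}\bigl(\bigcap_{\ell}A_\ell^c\bigr)\le\exp\bigl(-c\sqrt{r}/\log(rK)\bigr)\le (rK)^{-2.2}$ only once $r\ge e^{12}/K$ — which is exactly where the $e^{12}/K$ in the statement originates. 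The tail bound $\bb{P}\{|\ep{E}^{\sf An}(\tau)|\ge r\}\le (rK)^{-2.1}+(rK)^{-2.2}$ for $r\ge e^{12}/K$ is then integrated to get the $L^p$ bound. So your proposal needs to be reworked along these lines: either inflate the threshold with the window length as the paper does (accepting a vanishing, not constant, per-epoch optimism probability), or supply a genuinely new argument controlling the dependent failures of the concentration events at the constant-width level — the claim as you wrote it does not hold with the algorithm's posterior width.
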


We will now briefly discuss how the above properties are put together to bound $\Reg_1(T,\mb{v})$ and $\Reg_2(T,\mb{v})$. A complete proof is provided in Appendix \ref{app:mainProof}.

\medskip \noindent {\bf Bounding the first term $\Reg_1(T,\mb{v})$.} 

Firstly, by our assumption $v_0 \ge v_i, \forall i$, the outside option is picked at least as often as any particular item $i$. Therefore, it is not difficult to see that the expected value of epoch length $|\ep{E}_\ell|$ is bounded by $K+1$, so that $\Reg_1(T,\mb{v})$ is bounded as 
$$(K+1) \bb{E}\left(\sum_{\ell=1}^L R(S^*, \mb{v}) - R(S_\ell, \pmb{\mu}(\ell))\right).$$
Recall that for every optimistic epoch, we have that the set $S^*$ has at least as much revenue on the sampled parameters as on the true parameters. Hence, optimistic epochs don't contribute to this term.
To bound the contribution of the remaining epochs, we bound the individual contribution of any ``non-optimistic" epoch $\ell$ 
by relating it to the closest optimistic epoch $\tau$ before it. 
By definition of an optimistic epoch and by the choice of $S_\ell$ as the revenue maximizing set for the sampled parameters $\pmb{\mu}(\ell)$, we have 
$$R(S^*,\mb{v}) - R(S_\ell,\pmb{\mu}(\ell)) \le R(S_\tau, \pmb{\mu}(\tau)) - R(S_\ell,\pmb{\mu}(\ell)) \le R(S_\tau, \pmb{\mu}(\tau)) - R(S_\tau,\pmb{\mu}(\ell)).$$
We will utilize the concentration property of the posterior and the Lipschitz property of the revenue function to bound the difference in the revenue of the set $S_\tau$ corresponding to two different sample parameters: $\pmb{\mu}(\tau)$ and $\pmb{\mu}(\ell)$.  From  \eqref{eq:lipschitz_var_revenue}, the difference in the revenues can be bounded by the sum of sample variances $\hat \sigma_i(\tau) +\hat \sigma_i(\ell)$ and since the variance at the beginning of epoch $\tau$ is larger than the variance at the beginning of epoch $\ell$, we have,
$$\left|R(S_\tau, \pmb{\mu}(\tau)) - R(S_\tau,\pmb{\mu}(\ell))\right| \lesssim O\left(\sum_{i \in S_\tau}\hat \sigma_i(\tau)\right).$$
From the above bound, we have that the regret in non-optimistic epoch is bounded by the sample variance in the closest optimistic epoch before it. Utilizing the fact on an average there are only $1/K$ non-optimistic epochs (see Lemma \ref{epoch_length_analysis}) between any two consecutive optimistic epochs, we can bound the term $\Reg_1(T,\mb{v})$ as: 
$$\Reg_1(T,\mb{v}) \leq (K+1)O\left(\bb{E}\left[\sum_{\ell \in {\sf optimistic}} \frac{1}{K}\sum_{i\in S_\ell} \hat\sigma_i(\ell)\right]\right).$$
A bound of $\tilde{O}(\sqrt{NT})$ on the sum of these deviations can be derived, which will also be useful for bounding the second term, as discussed next.


\medskip \noindent {\bf Bounding the second term $\Reg_2(T,\mb{v})$.} 

Noting that the expected epoch length when set $S_\ell$ is offered is $ 1+\sum_{j\in S_\ell}v_i$, $\Reg_2(T,\mb{v})$ can be reformulated as
$$\Reg_2(T,\mb{v}) = \bb{E}\left[\sum_{\ell=1}^L (1+V(S_\ell)) \left(R(S_\ell,\pmb{\mu}(\ell)) - R(S_\ell,\mb{v})\right)\right],$$
 Again, as discussed above, using Lipschitz property of revenue function and the concentration properties of the posterior distribution, this can be bounded in terms of posterior standard deviation (see \eqref{eq:first_var})
$$ \Reg_2(T,\mb{v}) \lesssim O\left(\bb{E}\left[\sum_{\ell=1}^L\sum_{i \in S_\ell} \hat{\sigma}_i(\ell)\right]\right).$$

Overall, the above analysis on $\Reg_1$ and $\Reg_2$ implies roughly the following bound on regret
$$O(\sum_{\ell=1}^L\sum_{i \in S_\ell} \hat{\sigma}_i(\ell)) = {O}\left(\sum_{\ell=1}^L\sum_{i \in S_\ell}\sqrt{\frac{v_i}{n_i(\ell)}}  + \frac{1}{n_i(\ell)}\right)\log{TK} \leq {O}(\sum_{i=1}^N\log{TK}\sqrt{v_in_i}),$$
  where $n_i$ is total number of times $i$ was offered in time $T$. Then, utilizing the bound of $T$ on the expected number of total picks, i.e., $\sum_{i=1}^N v_in_i \leq T$, and doing a worst case scenario analysis, we obtain a bound of $\tilde{O}(\sqrt{NT})$ on $\Reg(T,\mb{v})$.


\comment{
The next step in the analysis is to show that for any optimistic epoch, $\ell$, i.e. for any $\ell \in \ep{T}$, we have $$R(S_\ell,\pmb{\mu}(\ell))  \geq R(S^*,\mb{v}),$$ or in other words the expected revenue in epoch $\ell$ with respect to the sampled values is higher than the true optimal revenue. In Lemma \ref{UCB_bound} we show that $R(S^*,\pmb{\mu}(\ell)) \geq R(S^*,\mb{v})$ and the above inequality follows from the design of our algorithm.  Also by design, we have for any $\tau$ and any offer set $S_\ell$, $R(S_\tau,\pmb{\mu}(\tau)) \geq  R(S_\ell,\pmb{\mu}(\tau))$. Hence, we have for any $\tau \in \ep{E}^{\sf An}(\ell)$, 
\begin{equation}\label{eq:epoch_anal_bound}
R(S^*,\mb{v}) - R(S_\tau,\pmb{\mu}(\tau)) \leq R(S_\ell,\pmb{\mu}(\ell)) - R(S_\ell,\pmb{\mu}(\tau)).
\end{equation}
Therefore, we have
\begin{equation}
\begin{aligned}
Reg_1(T,\mb{v}) &\leq (K+1)\sum_{\ell \in \ep{T}} \sum_{\tau \in \ep{E}^{\sf An}(\ell)} R(S_\ell,\pmb{\mu}(\ell)) - R(S_\ell,\pmb{\mu}(\tau))\\
& \leq (K+1)\sum_{\ell \in \ep{T}} | \ep{E}^{\sf An}(\ell)| {O}\left(\sum_{i \in S_\ell}\sqrt{\frac{v_i}{n_i(\ell)}}  + \frac{1}{n_i(\ell)}\right)\log{TK},
\end{aligned}
\end{equation}
where the first inequality follows from \eqref{eq:reg_1ub} and \eqref{eq:epoch_anal_bound}, while the second inequality follows from a bound similar to \eqref{lipschitz_main}. The key idea here is to bound the number of algorithmic epochs in an analysis epoch. By ensuring perfect correlation between all the samples from the posterior distribution, $\mu_i(\ell)$ and by taking the maximum of $K$-independent sample, the probability of any given algorithmic epoch $\ell$ is not optimistic is ${O}(\frac{1}{K})$ and hence the expected number of algorithmic epochs in an analysis epoch is bounded by ${O}(\frac{1}{K})$, leading to a similar situation as in bounding $Reg_1(T,\mb{v}).$ We provide the precise statements on epoch length in Lemma \ref{epoch_length_analysis} and complete the proof in Appendix. Although, we have employed the concept of optimistic epochs or offer set in a manner similar to UCB, it should be noted that our analysis is different from a UCB approach, where the probability of any epoch not being optimistic is very small ($O(\frac{1}{T})$) unlike our case where the probability does not depend on $T$ and is bounded by $O(\frac{1}{K})$. Furthermore,  unlike UCB approach, the definition of our analysis epoch requires the samples from posterior distribution to be higher for items only in the optimal offer set instead of all $N$ items. We provide the complete proof in Appendix. 
}

\section{Empirical study}\label{sec:computations}
In this section, we analyze the various design components of our Thompson Sampling approach through numerical simulations. The aim is to  isolate and understand the effect of individual features of our algorithm design like Beta posteriors vs. Gaussian approximation, independent sampling vs. correlated sampling, and single sample vs. multiple samples, on the practical performance. 


\begin{figure}[t]
\begin{center}
{\includegraphics[scale=0.25]{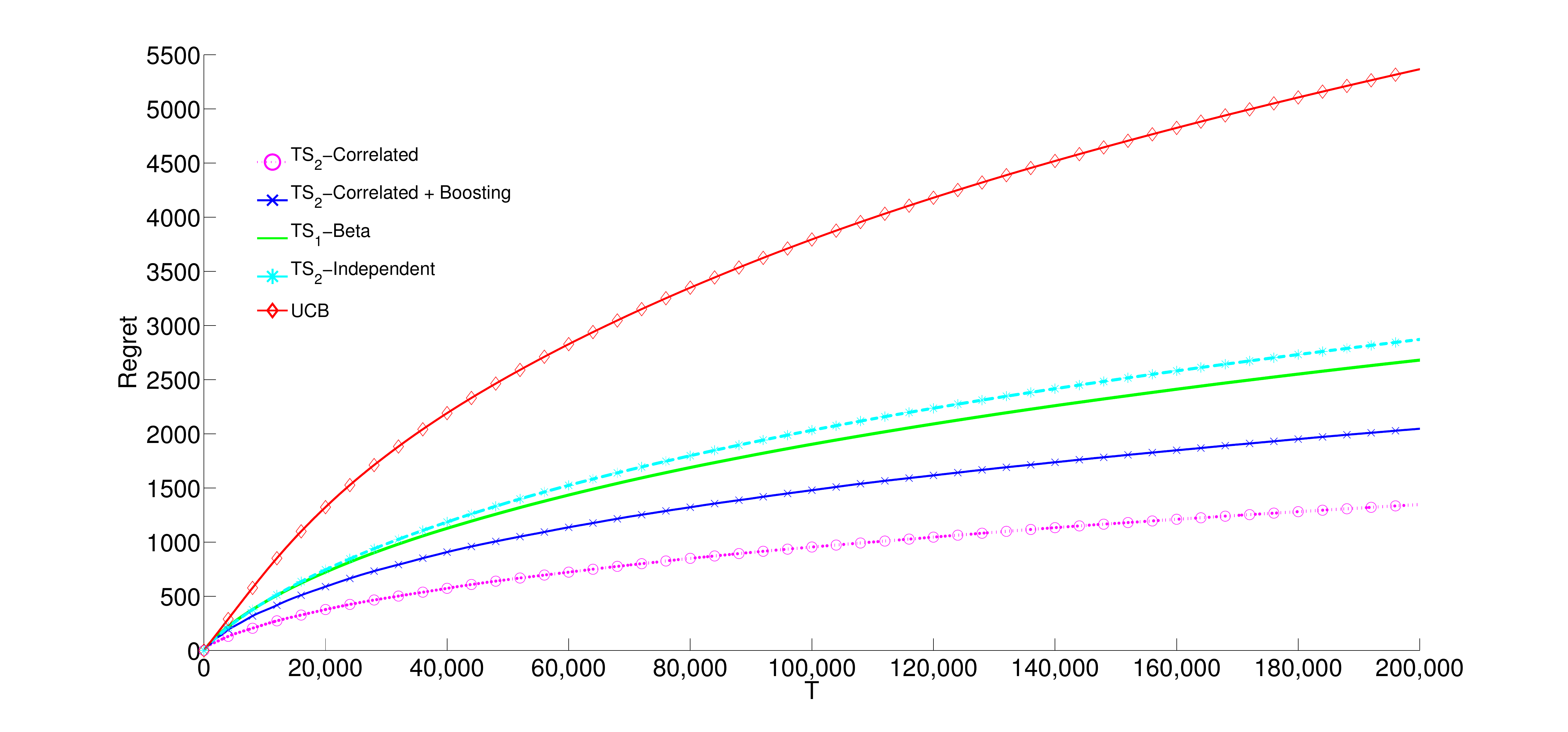}} \vspace{-0.2in}
\caption{Regret growth with $T$ for various heuristics on a randomly generated \banditMNL~instance with $N=1000, K=10$. 
\label{fig:regret_heuristic} \vspace{-0.2in}}
\end{center}
\end{figure}

We simulate an instance of \banditMNL~problem with $N=1000$, $K=10$ and $T=2\times10^5$, and the MNL parameters $\{v_i\}_{i = 1,\ldots,N}$ generated randomly from ${\sf Unif}[0,1]$. And, we compute the average regret based on $50$ independent simulations over the randomly generated instance.  In Figure \ref{fig:regret_heuristic}, we report performance of successive variants of TS: 
\begin{itemize}\setlength\itemsep{2pt}
\item[$i)$] basic version of TS with independent Beta priors, as described in Algorithm~\ref{learn_algo}, referred to as ${\sf TS_1\mbox{-} Beta}$, 
\item[$ii)$] Gaussian posterior approximation with independent sampling, referred to as ${\sf TS_2\mbox{-}Independent}$, 
\item[$iii)$] Gaussian posterior approximation with correlated sampling, referred to as ${\sf TS_2\mbox{-} Correlated}$, and finally, 
\item[$iv)$] Gaussian posterior approximation with correlated sampling and boosting by using multiple ($K$) samples, referred to as ${\sf TS_2\mbox{-} Correlated+Boosting}$, which is essentially the version with all the features of Algorithm~\ref{learn_algo_normal}. 
\end{itemize}

For comparison, we also present the performance of ${\sf UCB}$ approach in~\cite{agrawalnear}. We repeated this experiment on several randomly generated instances and a similar performance was observed. The performance of all the variants of TS is observed to be better than the UCB approach in our experiments, which is consistent with the other empirical evidence in the literature. 

Among the TS variants, the performance of ${\sf TS_1\mbox{-} Beta}$, i.e., the basic version with independent beta priors (essentially Algorithm~\ref{learn_algo}) is quite similar to ${\sf TS_2\mbox{-} Independent}$, the version with independent Gaussian (approximate) posteriors;
indicating that the effect of posterior approximation is minor. 
The performance of ${\sf TS_2\mbox{-} Correlated}$, where we generated correlated samples from the Gaussian distributions, is significantly better than all the other variants of the algorithm. This is consistent with our remark earlier that to adapt the Thompson sampling approach of the classical MAB problem to our setting, ideally we would like to maintain a joint prior over the parameters $\{v_i\}_{i=1,\ldots,N}$ and update it to a joint posterior on observing the bandit feedback. However, since this can be quite challenging and intractable, we used independent priors over the parameters. The superior performance of ${\sf TS_2\mbox{-} Correlated}$ demonstrates the potential benefits of considering a joint (correlated) prior/posterior in such settings with combinatorial arms. Finally, we observe that the performance of ${\sf TS_2\mbox{-}  Correlated+Boosting}$, where an additional ``variance boosting" is provided through $K$ independent samples, is worse than ${\sf TS_2\mbox{-}  Correlated}$ as expected, but still significantly better than the independent Beta posterior version ${\sf TS_1\mbox{-} Beta}$. Therefore, significant improvements in performance due to correlated sampling feature of Algorithm~\ref{learn_algo_normal} compensate for the slight deterioration caused by boosting.

\section{Conclusion.} In this paper, we consider a combinatorial variant of the traditional multi-armed Bandit problem, 
\banditMNL\;and present a TS based policy for this problem. Focusing on designing a computationally efficient algorithm that facilitates theoretical analysis, we highlight several challenges involved in adaptive TS based approaches for the \banditMNL\;problem and discuss algorithm design choices to address them. To the best of our knowledge, the idea of correlated sampling for combinatorial arms is novel, and potentially useful for further combinatorial bandit problems. 

\begin{APPENDICES}

\section{Unbiased Estimate $\tilde{v}_{i,\ell}$ and Conjugate priors}
 We first prove that the estimate obtained from epoch based offerings, $\tilde{v}_{i,\ell}$ in Algorithm \ref{learn_algo} is unbiased estimate and is distributed geometrically with probability of success $\frac{1}{v_i+1}.$ Specifically, we have the following result.
\agrawalUnbiased*
\vspace{3pt}
\begin{proof}{Proof.}
We prove the result by computing the moment generating function, from which we can establish that $\tilde{v}_{i,\ell}$ is a geometric random variable with parameter $\frac{1}{1+v_i}$. Thereby also establishing that $\tilde{v}_{i,\ell}$  are unbiased estimators of $v_i$.  Specifically, we show the following result.

The moment generating function of estimate conditioned on $S_\ell$, $\hat{v}_{i}$, is given by,
\begin{eqnarray*}
\bb{E}\left(e^{\theta\tilde{v}_{i,\ell}} {\Big |} S_\ell\right) = \frac{1}{1-v_i(e^\theta-1)},\;\text{for all}\; \;\theta \leq \log{\frac{1+v_i}{v_i}},  \; \text{for all} \;\; i=1,\cdots,N.
\end{eqnarray*}
We focus on proving the above result. From \eqref{choice_probabilities}, we have that probability of no purchase event when assortment $S_\ell$ is offered is given by \[\textstyle p_0(S_\ell) = \displaystyle \frac{1}{1+\sum_{j\in S_\ell} v_j}.\] 
Let $n_\ell$ be the total number of offerings in epoch $\ell$ before a no purchased occurred, i.e., $n_\ell = |\ep{E}_\ell|-1$. 
Therefore, $n_\ell$ is a geometric random variable with probability of success $p_0(S_\ell)$. 
And, given any fixed value of $n_{\ell}$,
$\tilde{v}_{i,\ell}$ is a binomial random variable with $n_\ell$ trials and probability of success given by 
$$ \textstyle q_i(S_\ell)= \displaystyle \frac{v_i}{\sum_{j\in S_\ell} v_j}.$$ 
In the calculations below, for brevity we use $p_0$ and $q_i$ respectively to denote $p_0(S_\ell)$ and $q_i(S_\ell)$. Hence, we have 
\[\bb{E}\left( e^{\theta\tilde{v}_{i,\ell}}\right) = E_{n_\ell}\left\{\bb{E}\left( e^{\theta\tilde{v}_{i,\ell}}\,\middle| \,n_\ell\right)\right\}.\]
Since the moment generating function for a binomial random variable with parameters $n,p$ is $\left(pe^\theta + 1-p\right)^n$, we have
\[\bb{E}\left( e^{\theta\tilde{v}_{i,\ell}}\, \middle| \, n_\ell\right) =  E_{n_\ell}\left\{{\left(q_ie^\theta + 1-q_i\right)^{n_\ell}}\right\}.\]
For any $\alpha$, such that, $\alpha(1-p) < 1$ $n$ is a geometric random variable with parameter $p$, we have \[\bb{E}(\alpha^n) = \frac{p}{1-\alpha(1-p)}.\] Note that for all $\theta < \log{\frac{1+v_i}{v_i}}$, we have $\left(q_ie^\theta+(1-q_i)\right)(1-p_0) = (1-p_0) + p_0v_i(e^\theta-1) < 1.$
Therefore, we have 
\begin{eqnarray*}
\bb{E}\left( e^{\theta\tilde{v}_{i,\ell}}\right) = \displaystyle \frac{1}{1-v_i(e^\theta-1)}\;\text{for all} \; \theta < \log{\frac{1+v_i}{v_i}}. \text{\hfill $\Halmos$}
\end{eqnarray*} 
\end{proof}

Building on this result. We will prove Lemma \ref{conjugate_prior}  that helped construct Algorithm \ref{learn_algo}. Recall Lemma \ref{conjugate_prior},
\conjugateprior*
\begin{proof}{Proof.} The proof of the lemma follows from the following result on the probability density function of the random variable $X_{\alpha,\beta}$. Specifically, we have for any $x > 0$
\begin{eqnarray}\label{pdf}
f_{\alpha,\beta}(x) = \frac{1}{B(\alpha,\beta)} \left(\frac{1}{1+x}\right)^{\alpha+1} \left(\frac{x}{x+1}\right)^{\beta-1},
\end{eqnarray}
where $B(a,b) = \frac{\Gamma(a)\Gamma(b)}{\Gamma(a+b)}$ and $\Gamma(a)$ is the gamma function. Since we assume that the parameter $v_i$'s prior distribution is same as that of $X_{\alpha,\beta}$, we have from \eqref{pdf} and Lemma \ref{unbiased_estimate},  $$\bb{P}\left(v_i \big | \tilde{v}_{i,\ell} = m \right) \propto \left(\frac{1}{1+v_i}\right)^{\alpha+2} \left(\frac{v_i}{v_i+1}\right)^{\beta+m-1}. \Halmos$$
\end{proof}

Given the pdf of the posterior in \eqref{pdf}, it is possible to compute the mean and variance of the posterior distribution. We show that they have simple closed form expressions. Recall Lemma \ref{posterior_moment}. 
\posteriormoment*
\begin{proof}{Proof.} We prove the result by relating the mean of the posterior to the mean of the Beta distribution. 
Let $\hat{X} = \frac{1}{X}-1.$ From \eqref{pdf}, we have 
\begin{eqnarray*}
\bb{E}(\hat{X}) = \frac{1}{B(\alpha,\beta)}\int_{0}^{\infty} x\left(\frac{1}{1+x}\right)^{\alpha+1} \left(\frac{x}{x+1}\right)^{\beta-1}dx,
\end{eqnarray*}
Substituting $y = \frac{1}{1+x}$, we have 
\begin{eqnarray*}
\bb{E}(\hat{X}) = \frac{1}{B(\alpha,\beta)}\int_{0}^{1} y^{\alpha-2} (1-y)^{\beta}dx = \frac{B(\alpha-1,\beta+1)}{B(\alpha,\beta)} = \frac{\beta}{\alpha-1}.
\end{eqnarray*}
Similarly, we can derive the expression for the ${\sf Var}(\hat{X})$. 
\end{proof}

\section{Structural properties of the optimal revenue for the MNL model}\label{appsec:structprop}
Here, we prove the restricted monotonicity and Lipschitz property of the optimal revenue of the MNL model.
\structprop*
\begin{proof}{Proof.}
We will first prove the restricted monotonicity property and extend the analysis to prove the Lipschitz property. 

\medskip \noindent  {\bf Restricted Monotonicity.} We prove the result by first showing that for any $j\in S$, we have $R(S,{\mathbf{v}}^j) \geq R(S,\mathbf{v})$, where ${\mathbf{w}}^j$ is vector $\mathbf{v}$ with the $j^{th}$ component increased to $w_{j}$, i.e.  $v^j_i = v_i$ for all $i \neq j$ and $w^j_j = w_j$. We can use this result iteratively to argue that increasing each parameter of MNL to the highest possible value increases the value of $R(S,\mathbf{w})$ to complete the proof. 

If there exists $j \in S$ such that $r_j < R(S)$, then removing the product $j$ from assortment $S$ yields higher expected revenue contradicting the optimality of $S$. Therefore, we have 
$$r_j \geq R(S). ~\forall j \in S.$$ 
Multiplying by $({v}^{UCB}_j-{w}_j)(\sum_{i \in S/j } w_i + 1)$ on both sides of the above inequality and re-arranging terms, we can show that $R(S,{\mathbf{w}}^j) \geq R(S,\mathbf{w})$. 

\medskip \noindent  {\bf Lipschitz.} Following the above analysis, we define sets $\ep{I}(S^*)$ and $\ep{D}(S^*)$  as
\begin{eqnarray*}
\begin{aligned}
\ep{I}(S^*) &= \left\{i \middle | i \in S^* \;\text{and} \; v_{i} \geq w_i\right\}\\
\ep{D}(S^*) &= \left\{i \middle | i \in S^* \;\text{and} \; v_{i} < w_i\right\},
\end{aligned}
\end{eqnarray*}
and vector $\mb{u}$ as,
\begin{eqnarray*}
{u}_i = \left\{ 
\begin{array}{ll}
w_i \;& \text{if} \; \; i \in \ep{D}(S^*),\\
v_i \; & \text{otherwise} .
\end{array}\right.
\end{eqnarray*}
By construction of $\mb{u}$, we have $u_i \geq v_i$ and $u_i \geq w_i$ for all $i $. Therefore from the restricted monotonicity property, we have 
\begin{equation*}
\begin{aligned}
R(S^*,\mb{v}) - R(S^*,\mb{w}) &\leq R(S^*, \mb{u}) - R(S^*,\mb{w})\\
&\leq  \displaystyle \frac{\displaystyle\sum_{i \in S^*}r_i u_i}{1+\displaystyle\sum_{j \in S^*} u_j} - \frac{\displaystyle\sum_{i \in S^*}r_i w_i}{1+\displaystyle\sum_{j \in S^*} u_j},\\
&\leq  \frac{\displaystyle \sum_{i \in S^*} \left(u_i - w_i\right)}{1+\displaystyle\sum_{j \in S^*} u_j}.
\end{aligned}
\end{equation*}
The result follows from the fact that $u_i \geq v_i$ and $u_i \geq w_i$ for all $i \in S^*.$ \hfill $\Halmos.$
\end{proof}
\section{Bounds on the deviation of MNL Expected Revenue} \label{sec:lipschitz}
Here, we bound the difference between the expected revenues of the offer set $S_\ell$ corresponding to the sampled parameters, $\pmb{\mu}(\ell)$ and the true parameters, $\mb{v}$. In order to establish this bound, we will first present two concentration results. In the first result, utilizing the large deviation properties of Gaussian distribution, we show that over time, the posterior distributions concentrate around their means. The second result proves a Chernoff-like bound which suggests that the means of the posterior distribution concentrates around the true parameters. The proof of the second result is involved and hence, for ease of exposure, we defer the proof to Appendix \ref{sec:concentration}. 

\begin{lemma1}\label{lem:normal_concentration_inequality} ~For any $\ell \leq T$ and $i \in \{1, \cdots, N\}$, we have for any $r > 0$, 
$$\bb{P}\left( | \mu_{i}(\ell) -  \hat{v}_i(\ell) | > 4\hat{\sigma}_i(\ell) \sqrt{\log{rK}}\right) \leq \frac{1}{r^4K^3}, $$where $\hat{\sigma}_i(\ell) = \sqrt{\frac{50\hat{v}_i(\ell)(\hat{v}_i(\ell)+1)}{n_i(\ell)}}  + \frac{75\sqrt{\log{TK}}}{n_i(\ell)}$.
\end{lemma1}
\begin{proof}{Proof.~}
Note that we have $\mu_i(\ell) = \hat{v}_{i}(\ell) + \hat{\sigma}_i(\ell) \cdot \underset{j=1, \cdots , K}{\text{max}}\; \{ \theta^{(j)}(\ell)\}.$ Therefore, from union bound, we have, 
\begin{eqnarray*}
\begin{aligned}
\bb{P}\left\{ | \mu_{i}(\ell) -  \hat{v}_i(\ell) | > 4\hat{\sigma}_i(\ell) \sqrt{\log{rK}} \; \; \Big| \hat{v}_i(\ell)\right\} &= \bb{P}\left( \bigcup\limits_{j=1}^K \left\{\theta^j(\ell) >4 \sqrt{\log{rK}}  \right\}\right)\\
&\leq \sum_{j=1}^K \bb{P}\left( \theta^j(\ell) > 4\sqrt{\log{rK}} \right)\\
\end{aligned}
\end{eqnarray*}
The result follows from the above inequality and the following anti-concentration bound for the normal random variable $\theta^{(j)}(\ell)$ (see formula 7.1.13 in \cite{abramowitz1964handbook}).
$$\frac{1}{4\sqrt{\pi}}\cdot e^{-7z^2/2} < {\sf Pr} \left( |\theta^{(j)}(\ell)| > z \right) \leq \frac{1}{2} e^{-z^2/2}.$$ \hfill $\Halmos$
\end{proof}
\begin{lemma1}\label{multiplicative_chernoff_estimates}
~If $v_i \leq 1$ for all $i=1, \cdots, N$, then for any $m,\rho > 0$, $\ell \in \{1,2,\cdots\}$  and $i \in \{1, \cdots, N\}$ we have,
\begin{enumerate}
\vspace{5pt}
\item $\displaystyle \ep{P}\left(\left|\hat{v}_i(\ell) - {v_i}\right| > 4\sqrt{\frac{\hat{v}_{i}(\ell)(\hat{v}_i(\ell)+1)m\log{(\rho+1)}}{n_i(\ell)}} + \frac{24m \log{(\rho+1)}}{n_i(\ell)}\right)   \leq  \frac{5}{\rho^m}.$
\vspace{5pt}
\item $\displaystyle \ep{P}\left(\left|\hat{v}_i(\rho) -v_i \right | \geq  \sqrt{\frac{12{v_i}m\log{(\rho+1)}}{n_i(\ell)}} + \frac{24m\log{(\rho+1)}}{n_i(\ell)}\;\right)  \leq \frac{4}{\rho^m}$.
\end{enumerate}
\end{lemma1}
From Lemma \ref{structprop}, Lemma \ref{lem:normal_concentration_inequality} and Lemma \ref{multiplicative_chernoff_estimates}, we have the following result. 
\begin{lemma1}\label{lem:lipschitz}
~For any epoch $\ell$, if $S_\ell = \underset{S : |S| \leq K}{\text{arg~max}}\;{R}(S,\pmb{\mu}(\ell))$
\begin{eqnarray*}
\bb{E}\left\{(1+\sum_{j \in S_\ell} v_j) \left[R(S_\ell,\pmb{\mu}(\ell)) - R(S_\ell,\mb{v})\right]\right\} \leq \bb{E}\left[C_1\sum_{i \in S_\ell}\sqrt{\frac{v_i\log{TK}}{n_i(\ell)}} + C_2\frac{\log{TK}}{n_i(\ell)}\right],
\end{eqnarray*}
where $C_1$ and $C_2$ are absolute constants (independent of problem parameters).
\end{lemma1}


\section{Proof of Theorem 1}
\label{app:mainProof}
~

\medskip \noindent {\bf Notations.} For the sake of brevity, we introduce some notations. 
\begin{itemize}
\item For any assortment S, $V(S) \overset{\Delta}= \sum_{i\in S}v_i$
\vspace{3pt}
\item For any $\ell,\tau \leq L$, define $\Delta R_\ell$ and $\Delta R_{\ell, \tau}$ in the following manner 
\vspace{3pt}
\begin{eqnarray*}
\begin{aligned}
\Delta R_\ell & \overset{\Delta}{=} (1+V(S_\ell))\left[R(S_\ell,\pmb{\mu}(\ell)) -  R(S_\ell,\mb{v})\right]\\
\Delta R_{\ell,\tau}& \overset{\Delta}{=} (1+V(S_\tau))\left[R(S_\ell,\pmb{\mu}(\ell)) -  R(S_\ell,\pmb{\mu}(\tau))\right]
\end{aligned}
\end{eqnarray*}
\item Let $\ep{A}_0$ denote the complete set $\Omega$ and for all $\ell =1,\ldots,L$, define events $\ep{A}_{\ell}$  as
\begin{eqnarray*}
\begin{aligned}
\ep{A}_{\ell} &= \left\{ \left|\hat{v}_i(\ell) -v_i \right | \geq  \sqrt{\frac{24{v_i}\log{(\ell+1)}}{n_i(\ell)}} + \frac{48\log{(\ell+1)}}{n_i(\ell)}\;\; \text{for some $i = 1, \cdots, N$}\right\}
\end{aligned}
\end{eqnarray*}
 where $\hat{\sigma}_i(\ell) = \sqrt{\frac{50\hat{v}_i(\hat{v}_i+1)}{n_i}}  + \frac{75\sqrt{\log{TK}}}{n_i}$.

\item 
\begin{eqnarray}\label{eq:analysis_epoch}
\begin{aligned}
\ep{T} &= \left\{\ell : \mu_{i}(\ell) \geq v_i \; \; \text{for all} \; i \in S^*\right\},\\
{\sf succ}(\ell) &= \min\{\bar{\ell} \in \ep{T}: \bar{\ell} > \ell\}\\
\ep{E}^{\sf An}(\ell) &= \{\tau: \tau \in (\ell,{\sf succ}(\ell))\}\;\;\text{for all} \ell \in \ep{T}.
\end{aligned}
\end{eqnarray}
Here $\ep{T}$ is the set of ``optimistic'' epoch indices, i.e. when value of $\mu_i(\ell)$ is higher than the value of $v_i$ for all products $i$ in the optimal offer set $S^*$ and ${\sf succ}(\ell)$ denote the next epoch index after $\ell$ that is optimistic. $\ep{E}^{\sf An}(\ell)$ be the set of epoch indices's between an optimistic epoch, $\ell \in \ep{T}$ and the successive epoch. We will refer to $\ep{E}^{\sf An}(\ell)$ as the ``analysis epoch'' starting at $\ell$. To avoid confusion, we will refer to the epoch in which a selected offer set is offered until an outside option is preferred as ``algorithmic epoch.'' More specifically, for the rest of this proof, we will refer to $\ep{E}_\ell$ as $\ep{E}^{\sf Al}_\ell$. Note that the analysis epoch can contain one or more algorithmic epochs. 
\end{itemize}
\vspace{1pt}
\begin{eqnarray}\label{eq:regret_decomposition}
\begin{aligned}
Reg(T,\mb{v}):& = \bb{E}\left[\sum_{\ell=1}^{L}|\ep{E}_\ell| \left(R(S^*,\mb{v}) - R(S_\ell,\mb{v})\right)\right]\\
&= \underbrace{ \bb{E}\left[\sum_{\ell=1}^L |\ep{E}_\ell| \left(R(S^*, \mb{v}) - R(S_\ell, \pmb{\mu}(\ell)) \right)\right]}_{\Reg_1(T,\mb{v})} + \underbrace{ \bb{E}\left[\sum_{\ell=1}^L |\ep{E}_\ell| \left(R(S_\ell, \pmb{\mu}(\ell)) - R(S_\ell,\mb{v})\right)\right]}_{\Reg_2(T,\mb{v})}.
\end{aligned}
\end{eqnarray}
We will complete the proof by bounding the two terms in \eqref{eq:regret_decomposition}.
%

We first focus on bounding $Reg_2(T,\mb{v}).$

\medskip \noindent {\bf Bounding $Reg_2(T,\mb{v})$}:  We have, 
\begin{eqnarray*}
\bb{E}\left[ |\ep{E}^{\sf Al}_\ell| \left(R(S_\ell,\pmb{\mu}(\ell)) -  R(S_\ell,\mb{v})\right)\right] = \bb{E}\left[ \bb{E}\left(|\ep{E}^{\sf Al}_\ell|  \Big | S_\ell\right)\left(R(S_\ell,\pmb{\mu}(\ell)) -  R(S_\ell,\mb{v})\right)\right] ,
\end{eqnarray*}
and conditioned on the event $S_\ell = S$, the length of the $\ell^{th}$ algorithmic epoch, $|\ep{E}^{\sf Al}|$ is a geometric random variable with probability of success $p_0(S_\ell)$, where $$p_0(S_\ell) = \frac{1}{1+\sum_{j \in S_\ell}v_j}.$$ Therefore, it follows that 
\begin{eqnarray}\label{eq:epoch_length}
\bb{E}\left(|\ep{E}^{\sf Al}| \Big | S_\ell =S\right) = 1+V(S). 
\end{eqnarray}
Hence the second term in \eqref{eq:regret_decomposition} can be reformulated as 
\begin{eqnarray}\label{eq:ce_regret_2}
\begin{aligned}
Reg_2(T,\mb{v}) &=  \bb{E}\left\{\sum_{\ell=1}^L \Delta R_\ell\right\}.\\
\end{aligned}
\end{eqnarray}
Noting that $\ep{A}_\ell$ is a ``low probability'' event, we analyze the regret in two scenarios, one when $\ep{A}_\ell$ is true and another when $\ep{A}^c_\ell$ is true.   More specifically, 
\begin{eqnarray*}
\begin{aligned}
\bb{E}\left(\Delta R_\ell\right)= \bb{E}\left[\Delta R_\ell\cdot\bbm{1}(\ep{A}_{\ell-1}) + \Delta R_\ell\cdot\bbm{1}(\ep{A}^c_{\ell-1})\right],
\end{aligned}
\end{eqnarray*}
Using the fact that $R(S_\ell,\pmb{\mu}(\ell))$ and $R(S_\ell,\mb{v})$ are both bounded by one and $V(S_\ell) \leq K$, we have 
\begin{eqnarray*}
\begin{aligned}
\bb{E}\left(\Delta R_\ell\right) \leq (K+1)\bb{P}(\ep{A}_{\ell-1}) +\bb{E}\left[ \Delta R_\ell\cdot\bbm{1}(\ep{A}^c_{\ell-1})\right].
\end{aligned}
\end{eqnarray*}
Substituting $m=2$ and $\rho = \ell$ in Lemma \ref{multiplicative_chernoff_estimates},  we obtain that $\bb{P}(\ep{A}_{\ell-1}) \leq \frac{1}{\ell^2}$. Therefore, it follows that,
\begin{eqnarray}\label{eq:second_term_low_prob_bound}
\bb{E}\left\{\Delta R_\ell\right\} \leq \frac{K+1}{\ell^2} +\bb{E}\left[ \Delta R_\ell\cdot\bbm{1}(\ep{A}^c_{\ell-1})\right].
\end{eqnarray}
From Lemma \ref{structprop}, we have that 
\begin{eqnarray*}
R(S_\ell,\pmb{\mu}(\ell)) -  R(S_\ell,\mb{v}) \leq \frac{ \sum_{i \in S_\ell}| \mu_i(\ell) - v_i |}{1+\displaystyle \sum_{j \in S_\ell}v_j}.
\end{eqnarray*}
Therefore, from \eqref{eq:epoch_length} it follows that,
\begin{eqnarray*}
\bb{E}\left[ \Delta R_\ell \cdot \bbm{1}(\ep{A}^c_{\ell-1})\right] \leq \bb{E}\left[\sum_{i\in S_\ell} |\mu_i(\ell) - v_i| \cdot \bbm{1}(\ep{A}^c_{\ell-1})\right].
\end{eqnarray*}
From triangle inequality, we have 
\begin{eqnarray*}
\bb{E}\left[ \Delta R_\ell \cdot \bbm{1}(\ep{A}^c_{\ell-1})\right] \leq \bb{E}\left[\sum_{i\in S_\ell} |\mu_i(\ell) - \hat{v}_i(\ell)| \cdot \bbm{1}(\ep{A}^c_{\ell-1})\right] + \bb{E}\left[\sum_{i\in S_\ell} |\hat{v}_i(\ell) - v_i| \cdot \bbm{1}(\ep{A}^c_{\ell-1})\right],
\end{eqnarray*}
and from the definition of the event $\ep{A}^c_{\ell-1}$, it follows that, 
\begin{eqnarray}\label{eq:second_term_decompose}
\bb{E}\left[ \Delta R_\ell \cdot \bbm{1}(\ep{A}^c_{\ell-1})\right] \leq \bb{E}\left[\sum_{i\in S_\ell} |\mu_i(\ell) - \hat{v}_i(\ell)| \right] + \bb{E}\left[\sqrt{\frac{24{v_i}\log{(\ell+1)}}{n_i(\ell)}} + \frac{48\log{(\ell+1)}}{n_i(\ell)}\right].
\end{eqnarray}
We will now focus on bounding the first term in \eqref{eq:second_term_decompose}.  In Lemma \ref{lem:normal_concentration_inequality}, we show that for any $r >0$ and $i = 1, \cdots, N$,  we have, $$\bb{P}\left( | \mu_{i}(\ell) -  \hat{v}_i(\ell) | > 4\hat{\sigma}_i(\ell) \sqrt{\log{rK}}\right) \leq \frac{1}{r^4K^3}, $$where $\hat{\sigma}_i(\ell) = \sqrt{\frac{50\hat{v}_i(\hat{v}_i+1)}{n_i}}  + \frac{75\sqrt{\log{TK}}}{n_i}$. Since $S_\ell \subset \{1,\cdots, N\}$, we have for any $i \in S_\ell$ and $r > 0$, we have  
\begin{eqnarray}\label{eq:normal_conc_bound}
\begin{aligned}
\bb{P}\left( | \mu_{i}(\ell) -  \hat{v}_i(\ell) | > 4\hat{\sigma}_i(\ell) \sqrt{\log{rK}}\;\text{for any $i \in S_\ell$}\right) & \leq\bb{P}\left(\bigcup_{i=1}^N | \mu_{i}(\ell) -  \hat{v}_i(\ell) | > 4\hat{\sigma}_i(\ell) \sqrt{\log{rK}}\right),\\
&\\
&\leq \frac{N}{r^4K^3}.
\end{aligned}
\end{eqnarray}

Since $|\mu_{i}(\ell) -  \hat{v}_{i}(\ell)|$ is a non-negative random variable, we have  
\begin{eqnarray}\label{eq:geq_expectation_formula}
\begin{aligned}
 \bb{E}(|\mu_{i}(\ell) -  \hat{v}_{i}(\ell)|) &= \int_{0}^\infty \bb{P}\left\{ |\mu_{i}(\ell) -  \hat{v}_{i}(\ell) |\geq x \right\}dx, \\
& = \int_{0}^{4\hat{\sigma}_i(\ell)\sqrt{\log{TK}}} \bb{P}\left\{ |\mu_{i}(\ell) -  \hat{v}_{i}(\ell)|  \geq x \right\} dx + \int_{4\hat{\sigma}_i(\ell)\sqrt{\log{TK}}}^\infty \bb{P}\left\{ |\mu_{i}(\ell) -  \hat{v}_{i}(\ell)|  \geq x \right\} dx,\\
&\leq 4\hat{\sigma}_i(\ell)\sqrt{\log{TK}} + \sum_{r = T}^{\infty}\int_{4\hat{\sigma}_i(\ell)\sqrt{\log{rK}}}^{4\hat{\sigma}_i(\ell)\sqrt{\log{(r+1)K}}} \bb{P}\left\{ Y  \geq x \right\} dx, \\
&\\
& \overset{a}{\leq } 4\hat{\sigma}_i(\ell)\sqrt{\log{TK}} + \sum_{r=T}^\infty\frac{N\sqrt{\log{(rK+1)}} - N\sqrt{\log{rK}}}{r^4K^3},\\
& \leq 4\hat{\sigma}_i(\ell)\sqrt{\log{TK}} \;\; \text{for any $T \geq N$},
\end{aligned}
\end{eqnarray}
where the inequality (a) follows from \eqref{eq:normal_conc_bound}. From \eqref{eq:ce_regret_2}, \eqref{eq:second_term_low_prob_bound}, \eqref{eq:second_term_decompose} and Lemma \ref{multiplicative_chernoff_estimates}, we have,
$$Reg_2(T,\mb{v}) \leq C_1\bb{E}\left( \sum_{\ell=1}^L\sum_{i \in S_\ell}\sqrt{\frac{v_i\log{TK}}{n_i(\ell)}}\right) + C_2\bb{E}\left( \sum_{\ell=1}^L\sum_{i \in S_\ell}\frac{\log{TK}}{n_i(\ell)} \right),$$
where $C_1$ and $C_2$ are absolute constants. If $T_i$ denote the total number of epochs product $i$ is offered, then we have,
\begin{eqnarray}\label{eq:regret_2nd_term_jensen}
\begin{aligned}
Reg_2(T,\mb{v}) & \overset{(a)}{\le} C_2N\log^2{TK} + C_1\bb{E}\left(\sum_{i=1}^n  \sqrt{v_iT_i\log{TK}} \right), \\
& \overset{(b)}{\le} C_2N\log^2{TK} + C_1\sum_{i=1}^N  \sqrt{v_i\log{(TK)}\bb{E}(T_i)}.
\end{aligned}
\end{eqnarray}
Inequality (a) follows from the observation that $L \leq T$, $T_i \leq T$, $\displaystyle \sum_{n_i(\ell)=1}^{T_i} \frac{1}{\sqrt{n_i(\ell)}} \leq \sqrt{T_i}$ and $\displaystyle \sum_{n_i(\ell)=1}^{T_i} \frac{1}{{n_i(\ell)}} \leq \log{T_i}$, while Inequality (b) follows from Jensen's inequality. 

 For any realization of $L$, $\ep{E}^{\sf Al}_\ell$, $T_i$, and $S_\ell$ in Algorithm \ref{learn_algo}, we have the following relation $\sum_{\ell=1}^L |\ep{E}^{\sf Al}_\ell|  \leq T$. Hence, we have $\bb{E}\left(\sum_{\ell=1}^L |\ep{E}^{\sf Al}_\ell|\right)  \leq T.$
Let $\mathcal{S}$ denote the filtration corresponding to the offered assortments $S_1,\cdots,S_L$, then by law of total expectation, we have, 
\begin{eqnarray*}
\begin{aligned}
\bb{E}\left(\sum_{\ell=1}^L |\ep{E}^{\sf Al}_\ell|\right) &= \bb{E}\left\{\sum_{\ell=1}^L E_{\mathcal{S}}\left( |\ep{E}^{\sf Al}_\ell|\right)\right\}= \bb{E}\left\{\sum_{\ell=1}^L1+\sum_{i\in S_\ell} v_i\right\}, \\
&=  \bb{E}\left\{L+\sum_{i=1}^n  v_i T_i\right\} = \bb{E}\{L\}+\sum_{i=1}^n  v_i \bb{E}(T_i).
\end{aligned}
\end{eqnarray*}
Therefore, it follows that  
\begin{eqnarray*}
\sum v_i\bb{E}(T_i) \leq T.
\end{eqnarray*}
To obtain the worst case upper bound, we maximize the bound in equation \eqref{eq:regret_2nd_term_jensen} subject to the above condition and hence, we have 
\begin{eqnarray}\label{eq:second_term_bound}
Reg_2(T,\mb{v})  \leq   C_1 \sqrt{NT\log{TK}}   + C_2 N\log^2{TK}).
\end{eqnarray}
We will now focus on the first term in \eqref{eq:regret_decomposition}.

\medskip \noindent {\bf Bounding $Reg_1(T,\mb{v})$:}
Recall, $\ep{T}$ is the set of optimistic epoch and the sanalysis epoch $\ep{E}^{\sf An}(\ell)$ is the set of non-optimistic epochs between $\ell^{th}$ epoch and the subsequent optimistic epoch. Therefore, we can reformulate $Reg_1(T,\mb{v})$ as, 
$$Reg_1(T,\mb{v}) = \bb{E}\left[\sum_{\ell =1 }^{L}   \bbm{1}(\ell \in \ep{T}) \cdot \sum_{\tau \in \ep{E}^{\sf An}(\ell)}  |\ep{E}^{\sf Al}_\tau| \left(R(S^*,\mb{v}) - R(S_\tau,\pmb{\mu}(\tau))\right)\right].$$
Note that for any $\ell$, by algorithm design we have that $S_\ell$ is the optimal set when the MNL parameters are given by $\pmb{\mu}(\ell)$, i.e., $R(S_\ell,\pmb{\mu}(\ell)) \geq R(S^*,\pmb{\mu}(\ell))$. From the restricted monotonicity property (see Lemma \ref{structprop}), for any $\ell \in \ep{T}$, we have $R(S^*,\pmb{\mu}(\ell)) \geq R(S^*,\mb{v})$. Therefore, it follows that, 
\begin{eqnarray*}\label{eq:first_term}
Reg_1(T,\mb{v}) \leq \bb{E}\left[\sum_{\ell =1 }^{L}   \bbm{1}(\ell \in \ep{T}) \cdot \sum_{\tau \in \ep{E}^{\sf An}(\ell)}  |\ep{E}^{\sf Al}_\tau| \left(R(S_\ell,\pmb{\mu}(\ell)) - R(S_\tau,\pmb{\mu}(\tau))\right)\right].
\end{eqnarray*}
Observe that by design for any $t$, $R(S_\tau,\pmb{\mu}(\tau))\geq R(S,\pmb{\mu}(\tau))$ for any assortment $S$. Therefore, we have for any $\tau$, we have $R(S_\tau,\pmb{\mu}(\tau))\geq R(S_\ell,\pmb{\mu}(\tau))$. From \eqref{eq:epoch_length} we have,
\begin{eqnarray} \label{eq:first_term_decompose}
\begin{aligned}
Reg_1(T,\mb{v}) \leq \bb{E}\left[ \sum_{\ell = 1}^L  \bbm{1}(\ell \in \ep{T}) \cdot\sum_{\tau \in \ep{E}^{\sf An}(\ell)}  \Delta R_{\ell,\tau}\right].
\end{aligned}
\end{eqnarray}
Following the approach of bounding $Reg_2(T,\mb{v})$,  we analyze the first term, $Reg_1(T,\mb{v})$ in two scenarios, one when $\ep{A}_\ell$ is true and another when $\ep{A}^c_\ell$ is true.   More specifically, 
\begin{eqnarray*}
\begin{aligned}
\bb{E}\left(\sum_{\tau \in \ep{E}^{\sf An}(\ell)}\Delta R_{\ell,\tau}\right)= \bb{E}\left[\sum_{\tau \in \ep{E}^{\sf An}(\ell)}\Delta R_{\ell,\tau}\cdot\bbm{1}(\ep{A}_{\ell-1}) + \Delta R_{\ell,\tau}\cdot\bbm{1}(\ep{A}^c_{\ell-1})\right].
\end{aligned}
\end{eqnarray*}
Adding and subtracting $R(S_\ell,\mb{v})$, from triangle inequality and Lemma \ref{structprop}, we obtain
\begin{eqnarray*}
R(S_\ell,\pmb{\mu}(\ell)) -  R(S_\ell,\pmb{\mu}(\tau)) \leq \frac{ \sum_{i \in S_\ell}| \mu_i(\ell) - v_i | + | \mu_i(\tau) - v_i |}{1+V(S_\ell)}.
\end{eqnarray*}
Using the fact that $R(S_\ell,\pmb{\mu}(\ell))$ and $R(S_\ell,\pmb{\mu}(\tau))$ are both bounded by one and $V(S_\tau) \leq K$, we have 
\begin{eqnarray}\label{eq:first_term_ind_regret}
\bb{E}\left(\sum_{\tau \in \ep{E}^{\sf An}(\ell)}\Delta R_{\ell,\tau}\right) \leq (K+1)\bb{E}\left[|\ep{E}^{\sf An}(\ell)|\cdot\bbm{1}(\ep{A}_{\ell-1}) + \frac{ \bbm{1}(\ep{A}^c_{\ell-1}) }{1+V(S_\ell)}\sum_{\tau \in \ep{E}^{\sf An}(\ell)}\sum_{i \in S_\ell}| \mu_i(\ell) - v_i | + | \mu_i(\tau) - v_i |\right].\;\;\;\;\;\;\;\;\;\;\;
\end{eqnarray}
Following the approach of Bounding $Reg_1(T,\mb{v})$, specifically along the lines of  \eqref{eq:ce_regret_2}, \eqref{eq:second_term_low_prob_bound}, \eqref{eq:second_term_decompose} and \eqref{eq:geq_expectation_formula}, we can show that
$$\hspace{-9mm} \bb{E}\left[\sum_{\tau \in \ep{E}^{\sf An}(\ell)}\frac{ \sum_{i \in S_\ell} | \mu_i(\ell) - v_i | + | \mu_i(\tau) - v_i |}{1+V(S_\ell)}\cdot\bbm{1}(\ep{A}^c_{\ell-1})\right] \leq \bb{E}\left[\frac{|\ep{E}^{\sf An}(\ell)|}{1+V(S_\ell)}\sum_{i \in S_\ell} \left(C_1\sqrt{\frac{v_i\log{TK}}{n_i(\ell)}} + C_2 \frac{\log{TK}}{n_i(\ell)}\right)\right],$$
where $C_1$ and $C_2$ are constants. 
Hence, from \eqref{eq:first_term_decompose} and \eqref{eq:first_term_ind_regret}, we have 
\begin{eqnarray}\label{eq:first_term_final_decompose}
\begin{aligned}
\frac{Reg_1(T,\mb{v})}{K+1} &\leq \bb{E}\left[\sum_{\ell=1}^L|\ep{E}^{\sf An}(\ell)|\cdot\bbm{1}(\ep{A}_{\ell-1}) + \frac{|\ep{E}^{\sf An}(\ell)|}{1+V(S_\ell)}\sum_{i \in S_\ell} \left(C_1\sqrt{\frac{v_i\log{TK}}{n_i(\ell)}} + C_2 \frac{\log{TK}}{n_i(\ell)}\right) \right].\\
\end{aligned}
\end{eqnarray}
We bound each of term in the above expression to complete the proof. We have by Cauchy-Schwartz inequality, 
\begin{eqnarray*}
\begin{aligned}
\bb{E}\left[|\ep{E}^{\sf An}(\ell)|\cdot\bbm{1}(\ep{A}_{\ell-1})\right] &\leq \bb{E}^{1/2}\left( |\ep{E}^{\sf An}(\ell)|^2\right)\cdot \bb{P}^{1/2}\left( \ep{A}_{\ell-1}\right).
\end{aligned}
\end{eqnarray*}
Substituting $m=2$ and $\rho = \ell$ in Lemma \ref{multiplicative_chernoff_estimates},  we obtain that $\bb{P}(\ep{A}_{\ell-1}) \leq \frac{1}{\ell^2}$. In Lemma \ref{epoch_length_analysis}, we show that $\bb{E}^{1/2}\left[ \left|\ep{E}^{\sf An}(\tau) \right |^2  \right] \leq \frac{e^{12}}{K}+30^{1/2}.$  Therefore, we have 
\begin{eqnarray}\label{eq:low_prob}
\bb{E}\left[ \sum_{\ell=1}^L |\ep{E}^{\sf An}(\ell)| \cdot I(\ep{A}_{\ell-1})\right] \leq \frac{e^{13}}{K}. 
\end{eqnarray}
Now we bound the second term in \eqref{eq:first_term_final_decompose}. For notational brevity, let 
\begin{eqnarray*}
\begin{aligned}
\delta_i(\ell) &= \frac{C_1}{1+V(S_\ell)}\sum_{i \in S_\ell} \sqrt{\frac{v_i\log{TK}}{n_i(\ell)}}, \\
\Delta_i(\ell) & = \frac{C_2}{1+V(S_\ell)}\sum_{i \in S_\ell} \frac{\log{TK}}{n_i(\ell)}.
\end{aligned}
\end{eqnarray*}
From Cauchy-Schwartz inequality, we have
\begin{eqnarray}\label{eq:first_term_regret_maj}
\hspace{-9mm}\sum_{\ell=1}^L{|\ep{E}^{\sf An}(\ell)|} \left(\delta_i(\ell) + \Delta_i(\ell)\right) \leq \left({\sum_{\ell=1}^L|\ep{E}^{\sf An}(\ell)|^2}\right)^{1/2}\cdot \left[\left(\sum_{\ell=1}^L \delta^2_{i}(\ell)\right)^{1/2} + \left(\sum_{\ell=1}^L\Delta^2_i(\ell)\right)^{1/2}\right].\end{eqnarray}
 Again applying Cauchy-Schwartz on the summation $\sum_{i \in S_\ell} \sqrt{v_i}\sqrt{\frac{\log{TK}}{n_i(\ell)}}$, we have 
 \begin{eqnarray*}
 \begin{aligned}
 \delta^2_i(\ell) &\leq \frac{C_1^2V(S_\ell)}{(1+V(S_\ell))^2} \cdot \sum_{i\in S_\ell}\frac{\log{TK}}{n_i(\ell)}, \\
 &\leq C_1^2\sum_{i \in S_\ell} \frac{\log{TK}}{n_i(\ell)}.
 \end{aligned}
 \end{eqnarray*}
Let $T_i$ denote the total number of epochs product $i$ is offered, then we have, $$\sum_{\ell=1}^L \sum_{i\in S_\ell}\frac{\log{TK}}{n_i(\ell)} = \sum_{i=1}^N  \sum_{n_i(\ell)=1}^{T_i} \frac{\log{TK}}{n_i (\ell)} \leq N \log{TK} \cdot \log{T}.$$
From Lemma \ref{epoch_length_analysis} and preceding two equations, it follows that
\begin{eqnarray*}
\bb{E}\left[\left({\sum_{\ell=1}^L|\ep{E}^{\sf An}(\ell)|^2}\right)^{1/2}\cdot \left(\sum_{\ell=1}^L \delta^2_{i}(\ell)\right)^{1/2}\right]  \leq \frac{C_1e^{13}\sqrt{NT} \log{TK}}{K}.
\end{eqnarray*}
Noting that $\sum_{\ell=1}^L \Delta^2_i(\ell) \leq C_2\log^2{TK}$, we have from Lemma \ref{epoch_length_analysis}
\begin{eqnarray*}
\bb{E}\left[\left({\sum_{\ell=1}^L|\ep{E}^{\sf An}(\ell)|^2}\right)^{1/2}\cdot \left(\sum_{\ell=1}^L \delta^2_{i}(\ell)\right)^{1/2}\right]  \leq \frac{C_2e^{13}\sqrt{T} \log{TK}}{K}.
\end{eqnarray*}
Hence, from the preceding two results and from \eqref{eq:first_term_final_decompose}, from \eqref{eq:low_prob} and \eqref{eq:first_term_regret_maj}, we have 
\begin{eqnarray}\label{eq:first_term_bound}
Reg_1(T,\mb{v}) \leq C \sqrt{NT}\log{TK}, 
\end{eqnarray}
where $C$ is a constant.  The result follows from \eqref{eq:first_term_bound} and \eqref{eq:second_term_bound}. \hfill


\subsection{Bounding the analysis epoch length}
\label{app:analEpochLength}
Here, we prove that the expected length (and higher moments) of the analysis epoch (see \ref{eq:analysis_epoch}) is bounded by a constant. Specifically, we have the following result. 
\vspace{-15pt}
\optimism*

\begin{proof}{Proof.} For notational brevity, we introduce some notation. 

\medskip \noindent {\bf Notation.}
\begin{itemize}
\item $n_i(\ell)$ denote the number of epochs product $i$ has been offered until epoch $\ell$ (including epoch $\ell$) in Algorithm \ref{learn_algo_normal}.
\item Let $\hat{v}_{i}(\ell)$ denote the value of $\hat{v}_i$ after epoch $\ell$ . 
\item $$r = \lfloor (q+1)^{1/p} \rfloor,$$ $$z =  \sqrt{\log{(rK+1)}},$$ and for each $i=1,\cdots,N$, $$\hat{\sigma}_i(\ell) = 4\sqrt{\frac{m\hat{v}_i(\ell) (\hat{v}_i(\ell)+1)}{n_i(\ell)}}  + \frac{24m\sqrt{\log{TK}}}{n_i(\ell)}.$$
\item Define events, 
\begin{eqnarray}\label{eq:events_analysis_epoch_proof}
\begin{aligned}
A_\ell &= \left\{ \mu_i(\ell) \geq \hat{v}_i(\ell) +{z}{\hat{\sigma}_i(\ell)} \;\;\text{for all}\; i \in S^*\right\},\\
B_\ell&= \left\{ \hat{v}_i(\ell) +{z}{\hat{\sigma}_i(\ell)}  \geq v_i  \;\;\text{for all}\; i \in S^* \right\},\\
\ep{B}_\tau & = \bigcap \limits_{\ell = \tau+1}^{\tau+r} B_\ell.
\end{aligned}
\end{eqnarray}
\end{itemize}
We have,
\begin{eqnarray*}
\bb{P}\left\{ \left|\ep{E}^{\sf An}(\tau) \right |^p  < q +1 \right\} = \bb{P}\left\{ \left|\ep{E}(\tau) \right |  \leq r \right\}.
\end{eqnarray*}
By definition, length of the analysis epoch, $\ep{E}^{\sf An}(\tau)$ less than $r$, implies that one of the algorithm epochs from $\tau+1, \cdots, \tau+{r}$ is optimistic. Hence, we have,   
\begin{eqnarray*}
\begin{aligned}
\bb{P}\left\{ \left|\ep{E}^{\sf An}(\tau) \right |  < r \right\} & = \bb{P}\left( \Big\{\left\{ \mu_i(\ell) \geq v_i  \; \text{for all}  \; i \in S^* \right\} \; \text{for some} \; \ell \in (\tau, \tau + r] \Big\} \right),\\
& \hspace{-15mm} \geq \bb{P}\left(\Big\{\left\{\mu_i(\ell) \geq \hat{v}_i(\ell) + z \hat{\sigma}_i(\ell) \geq v_i  \; \text{for all}  \; i \in S^* \right\}\; \text{for some} \; \ell \in (\tau, \tau + r]\Big\} \right).
\end{aligned}
\end{eqnarray*}
From \eqref{eq:events_analysis_epoch_proof}, we have, 
\begin{eqnarray}\label{eq:complement_prob}
\begin{aligned}
\bb{P}\left\{ \left|\ep{E}^{\sf An}(\tau) \right |  < r \right\} &\geq \bb{P} \left(\bigcup\limits_{\ell=\tau+1}^{\tau+r} A_\ell \cap B_\ell \right),\\
& = 1-\bb{P}\left( \bigcap\limits_{\ell=\tau+1}^{\tau+r} A^c_\ell \cup B^c_\ell \right).
\end{aligned}
\end{eqnarray}
We will now focus on the term, $\bb{P}\left( \bigcap\limits_{\ell=\tau+1}^{\tau+r} A^c_\ell \cup B^c_\ell \right)$, 
\begin{eqnarray}\label{eq:conditional_expression}
\begin{aligned}
\hspace{-5mm}\bb{P}\left( \bigcap\limits_{\ell=\tau+1}^{\tau+r} A^c_\ell \cup B^c_\ell\right) &= \bb{P}\left(\left\{\bigcap\limits_{\ell=\tau+1}^{\tau+r} A^c_\ell \cup B^c_\ell\right\} \cap   \ep{B}_\tau  \right)  + \bb{P}\left(\left\{\bigcap\limits_{\ell=\tau+1}^{\tau+r} A^c_\ell \cup B^c_\ell\right\} \cap   \ep{B}^c_\tau \right),\\
&\leq\bb{P} \left(\bigcap\limits_{\ell=\tau+1}^{\tau+r}A^c_\ell  \right) + \bb{P}(\ep{B}^c_\tau ),\\
& \leq \bb{P} \left(\bigcap\limits_{\ell=\tau+1}^{\tau+r}A^c_\ell \right) \;+\; \sum_{\ell=\tau+1}^{\tau+r}\bb{P}({B}^c_\ell  ),\\
\end{aligned}
\end{eqnarray}
where the inequality follows from union bound. Note that, 
\begin{eqnarray}\label{eq:union_bound_martingale}
\begin{aligned}
\bb{P}(B^c_\ell) &= \bb{P}\left(\bigcup\limits_{i\in S^*}\left\{\hat{v}_i(\ell) +{z}{\hat{\sigma}_i(\ell)}  < v_i  \right\} \right),\\
& \leq \sum_{i \in S^*} \bb{P}\left(\hat{v}_i(\ell) +{z}{\hat{\sigma}_i(\ell)}  < v_i   \right).
\end{aligned}
\end{eqnarray}
Since $r$ is trivially less than $T$, we have $ rK + 1 \leq TK$, we have $\sqrt{\log{(rK+1)}\cdot \log{TK}}\geq \log{(rK+1)}$ and therefore it follows that,
$$\bb{P}\left(\hat{v}_i(\ell) +{z}{\hat{\sigma}_i(\ell)} < v_i   \right) \leq \bb{P}\left(\hat{v}_i(\ell) +4\sqrt{\frac{m\hat{v}_i(\ell) (\hat{v}_i(\ell)+1)\log\left(rK+1\right)}{n_i(\ell)}}  + \frac{24m\log{\left(rK+1\right)}}{n_i(\ell)}  < v_i   \right).$$
 Substituting $m=3.1$ and $\rho = rK$ in Lemma \ref{multiplicative_chernoff_estimates}, we obtain, 
\begin{eqnarray}\label{eq:low_prob_case}
\bb{P}\left(\hat{v}_i(\ell) +{z}{\hat{\sigma}_i(\ell)}  < v_i   \right) \leq \frac{1}{\;(rK)^{3.1}}.
\end{eqnarray}
From \eqref{eq:union_bound_martingale} and \eqref{eq:low_prob_case}, we obtain, 
\begin{eqnarray}\label{eq:low_prob_aggr}
\begin{aligned}
\bb{P}(B^c_\ell) &\leq \frac{1}{r^{3.1}K^{2.1}}, \;\;\text{and}\\
\sum_{\ell=\tau+1}^{\tau+r}\bb{P}(B^c_\ell) & \leq \frac{1}{(rK)^{2.1}}.
\end{aligned}
\end{eqnarray}
We will now use the tail bounds for Gaussian random variables to bound the probability $\bb{P}(A^c_\ell)$. For any Gaussian random variable, $Z$ with mean $\mu$ and standard deviation $\sigma$, we have, 
$$Pr(Z > \mu + x\sigma) \geq \frac{1}{\sqrt{2\pi}}\frac{x}{x^2+1}e^{-x^2/2}.$$
Note that by construction of $\mu_i(\ell)$ in Algorithm \ref{learn_algo_normal}. We have,
$$\bb{P}\left(\bigcap\limits_{\ell=\tau+1}^{\tau+r}A^c_\ell \right) = \bb{P}\Big(\theta^{(j)}(\ell) \leq z \;\; \text{for all} \; \ell \in (\tau,\tau+r] \;\text{and for all} \; j =1,\cdots,K \Big).$$
Since $\theta^{(j)}(\ell), \; j= 1,\cdots,K, \; \ell = \tau+1, \cdots, \tau+r$ are independently sampled from the distribution, $\ep{N}\left(0,1\right)$, we have,
\begin{eqnarray}\label{eq:prod}
\begin{aligned}
\bb{P}\left\{\bigcap\limits_{\ell=\tau+1}^{\tau+r}A^c_\ell \right\} & \leq \left[1- \left(\frac{1}{\sqrt{2\pi}}\frac{\sqrt{\log{(rK+1)}}}{\log{(rK+1)}+1}\cdot\frac{1}{\sqrt{rK+1}}\right)\right]^{rK} \\
& \leq \exp\left( - \frac{r^{1/2}}{\sqrt{2\pi}}\frac{2\sqrt{\log{(rK+1)}}}{4\log{(rK+1)}+1} \right)\\
& \leq \frac{1}{(rK)^{2.2}} \;\;\text{for any} \; r \geq \frac{e^{12}}{K}.
\end{aligned}
\end{eqnarray}
From \eqref{eq:complement_prob}, \eqref{eq:conditional_expression}, \eqref{eq:low_prob_aggr} and \eqref{eq:prod}, we have that,   
$$\bb{P}\left\{ \left|\ep{E}^{\sf An}(\tau) \right |  < r \right\} \geq 1-\frac{1}{(rK)^{2.1}} - \frac{1}{(rK)^{2.2}} \;\; \text{for any} \; r \geq \frac{e^{12}}{K}.$$
From definition  $r \geq (q+1)^{1/p}-1$, we obtain
$$\bb{P}\left\{ \left|\ep{E}^{\sf An}(\tau) \right |^p  < q+1 \right\} \geq 1-\frac{1}{(q+1)^{2.1/p}-1} - \frac{1}{(q+1)^{2.2/p}-1} \;\; \text{for any} \; q \geq \left(\frac{e^{12}}{K} + 1\right)^p.$$
Therefore, we have, 
\begin{eqnarray*}
\begin{aligned}
\bb{E}\left[ \left|\ep{E}^{\sf An}(\tau) \right |^p  \right] &= \sum_{q = 0}^\infty \bb{P}\left\{ \left|\ep{E}(\tau) \right |^p  \geq \ell \right\},\\
&\leq \left(\frac{e^{12}}{K} + 1\right)^p + \sum_{q = \frac{e^{12p}}{K^p}}^\infty \bb{P}\left\{ \left|\ep{E}(\tau) \right |^p  \geq \ell \right\},\\
&\leq e^{12p} + \displaystyle \sum_{q = \frac{e^{12p}}{K^p}}^\infty \frac{1}{\ell^{2.1/p}} + \frac{1}{\ell^{2.2/p}} \leq \left(\frac{e^{12}}{K} + 1\right)^p+30.
\end{aligned} 
\end{eqnarray*}
The result follows from the above inequality. \hfill $\Halmos.$
\end{proof}


\subsection{Some concentration bounds}\label{sec:concentration}
In this section, we prove bounds on how fast our estimate $\hat{v}_i$ converges to the true mean. For the rest of this section, we assume that $\hat{v}_i(\ell)$ and $n_i(\ell)$ are the values of $\hat{v}_i$ and $n_i$ in Algorithm \ref{learn_algo_normal} before the beginning of epoch $\ell$. The concentration bounds we prove in the section are similar to Chernoff bounds, but for the fact that $n_i(\ell)$ is a random variable and $\hat{v}_i(\ell)$ is the mean of random number of i.i.d samples. Hence, we use a self-normalized martingale technique to derive concentration bounds. Specifically, we have,

\begin{theor1}\label{multiplicative_chernoff_geometric}
~Let $\delta_i$, $i =1,\cdots, N$ be arbitrary random variables. If $v_i \leq 1,$ for all $i=1, \cdots, N$, then we have, for all $i = 1, \cdots, N$, 
\begin{enumerate}
\item
\begin{eqnarray*}
Pr\left( \hat{v}_{i}(\ell) > (1+\delta_i) v_i\right) \leq 
\bb{E}^{\frac{1}{2}}\left[\exp{\left(-\frac{v_i \delta_i^2n_i(\ell)}{2(1+\delta_i)(1+v_i)^2}\right)} \right]\; ,
\end{eqnarray*}
and 
\item \begin{eqnarray*}
Pr\left(\hat{v}_{i}(\ell) < (1-\delta_i) v_i  \right) \leq 
\bb{E}^{\frac{1}{2}}\left[\exp{\left(- \frac{v_i\delta_i^2 n_i(\ell)}{6 (1+v_i)^2}\left(3 - \frac{2\delta_iv_i}{1+v_i}\right)\right)}\right]. 
\end{eqnarray*}
\end{enumerate}
\end{theor1}
\begin{proof}{Proof.} Fix $i$.  We have $$\hat{v}_i(\ell) = \frac{1}{n_i(\ell)}\sum_{\tau=1}^{\ell} \tilde{v}_{i,\tau} \bbm{1}(i \in S_\tau).$$
Therefore, bounding $Pr\left( \hat{v}_{i}(\ell) > (1+\delta_i) v_i\right)$ and $Pr\left( \hat{v}_{i}(\ell) < (1-\delta_i) v_i\right)$ is equivalent to bounding $Pr\left( \sum_{\tau=1}^\ell \tilde{v}_{i,\tau} \bbm{1}(i \in S_\tau) > (1+\delta) v_in_i(\ell)\right)$ and $Pr\left( \sum_{\tau=1}^\ell \tilde{v}_{i,\tau} \bbm{1}(i \in S_\tau) < (1-\delta) v_in_i(\ell)\right)$. We will bound the first term and then follow a similar approach for bounding the second term  to complete the proof. 
\vspace{4pt}
\subsection*{ Bounding ${Pr\left(\hat{v}_{i}(\ell) > (1+\delta_i) v_i \right)}$:} 
From Markov Inequality,   we have for any $\lambda > 0$, 
\begin{eqnarray}\label{eq:lhs_MI}
\begin{aligned}
Pr\left(\sum_{\tau=1}^\ell \tilde{v}_{i,\tau} \bbm{1}(i \in S_\tau) > (1+\delta_i) v_i n_i(\ell)\right) &= Pr \left\{ \exp\left(\lambda \sum_{\tau=1}^\ell \tilde{v}_{i,\tau} \bbm{1}(i \in S_\tau) \right)> \exp\left(\lambda (1+\delta_i) v_i n_i(\ell)\right)\right\},\\
& = Pr \left\{ \exp\left(\lambda \sum_{\tau=1}^\ell \tilde{v}_{i,\tau} \bbm{1}(i \in S_\tau) - \lambda (1+\delta_i) v_in_{i}(\ell)\right) > 1\right\}, \;\;\;\;\;\;\;\;\;\;\;\;\\
& \leq \bb{E} \left[\exp\left(\lambda \sum_{\tau=1}^\ell \tilde{v}_{i,\tau} \bbm{1}(i \in S_\tau) - \lambda(1+\delta_i) v_in_i(\ell)\right)\right].
\end{aligned}
\end{eqnarray}
For notational brevity, denote $f(\lambda,v_i)$ by the function, 
$$f(\lambda,v_i) = -\frac{\log\left({1-v_i(e^{2\lambda}-1)}\right)}{2} .$$
We have, 
\begin{eqnarray}\label{eq:lhs_neg_corr}
\begin{aligned}
&\bb{E} \left[\exp\left(\lambda \sum_{\tau=1}^\ell \tilde{v}_{i,\tau} \bbm{1}(i \in S_\tau) - \lambda(1+\delta_i) v_in_i(\ell)\right)\right]\\
 &= \bb{E} \left[\exp\left(\sum_{\tau = 1}^\ell (\lambda\tilde{v}_{i,\tau} -f(\lambda,v_i))\cdot \bbm{1}(i \in S_\tau)\right) \cdot \exp\Big(-\lambda (1+\delta_i)v_i (1-f(\lambda,v_i))  n_i(\ell)\Big)\right],\\
&\leq \bb{E}^{\frac{1}{2}} \left[\exp\left(\sum_{\tau = 1}^\ell (2\lambda\tilde{v}_{i,\tau} - 2f(\lambda,v_i))\cdot \bbm{1}(i \in S_\tau)\right) \right] \cdot \bb{E}^{\frac{1}{2}}\left[\exp\Big(-2\lambda (1+\delta_i)v_i (1-f(\lambda,v_i)) n_i(\ell)\Big)\right],
\end{aligned}
\end{eqnarray}
where the above inequality follows from Cauchy-Schwartz inequality. Let $\ep{F}_\tau$ be the filtration corresponding to the history until epoch $\tau$. Note that for any $\tau$, $\bbm{1}(i \in S_\tau)$ conditioned on $F_{\tau}$ is a constant and $\{\tilde{v}_{i,\tau} | \ep{F}_\tau\}$ is a geometric random variable. From the proof of Lemma \ref{unbiased_estimate}, for all $\tau \geq 1$ and for any $0<\lambda <\frac{1}{2} \log{\frac{1+v_i}{v_i}},$ we have, $$\bb{E}\left(e^{2\lambda \tilde{v}_{i,\tau}\bbm{1}(i \in S_\tau)}\middle | \ep{F}_{\tau}\right) = \left(\frac{1}{1-v_i(e^{2\lambda}-1)}\right)^{\bbm{1}(i \in S_\tau)}.$$
Therefore, it follows that 
\begin{eqnarray}\label{eq:lhs_mgf_ineq}
\bb{E}\left(e^{(2\lambda \tilde{v}_{i,\tau}- 2f(\lambda,v_i)) \cdot \bbm{1}(i \in S_\tau)}\middle | \ep{F}_{\tau}\right) \leq 1,
\end{eqnarray}
and 
\begin{eqnarray*}
\begin{aligned}
\hspace{-10mm}\bb{E} \left[ \exp\left(\sum_{\tau = 1}^\ell (2\lambda \tilde{v}_{i,\tau} - 2f(\lambda,v_i))\cdot \bbm{1}(i \in S_\tau)\right) \right] &= \bb{E} \left[\bb{E}\left\{\exp\left((2\lambda\tilde{v}_{i,\tau} -2f(\lambda,v_i))\cdot \bbm{1}(i \in S_\tau)\right)  \middle | \ep{F}_\ell\right\}\right]\\
&\hspace{-25mm}= \bb{E} \left[\prod_{\tau=1}^{\ell-1}\exp\left( (2\lambda\tilde{v}_{i,\tau} -2f(\lambda,v_i))\cdot \bbm{1}(i \in S_\tau)\right)\cdot \bb{E}\left(e^{(2\lambda \tilde{v}_{i,\ell}- 2f(\lambda,v_i)) \cdot \bbm{1}(i \in S_\ell)}\middle | \ep{F}_{\ell}\right)\right]\\
&\hspace{-25mm} \leq \bb{E} \left[\prod_{\tau=1}^{\ell-1}\exp\left((2\lambda\tilde{v}_{i,\tau} -2f(\lambda,v_i))\cdot \bbm{1}(i \in S_\tau)\right)\right],
\end{aligned}
\end{eqnarray*}
where the inequality follows from \eqref{eq:lhs_mgf_ineq}. Similarly by conditioning with $\ep{F}_{\ell-1},  \cdots, \ep{F}_1$, we obtain, 
$$\bb{E} \left[\exp\left(\sum_{\tau = 1}^\ell (2\lambda\tilde{v}_{i,\tau} -2f(\lambda,v_i))\cdot \bbm{1}(i \in S_\tau)\right) \right] \leq 1.$$
From \eqref{eq:lhs_MI} and \eqref{eq:lhs_neg_corr}, we have
\begin{eqnarray*}
Pr\left(\sum_{\tau=1}^\ell \tilde{v}_{i,\tau} \bbm{1}(i \in S_\tau) > (1+\delta_i) v_i n_i(\ell)\right) \leq \bb{E}^{\frac{1}{2}}\left[\exp\Big(-2\lambda (1+\delta_i)v_i (1-f(\lambda,v_i)) n_i(\ell)\Big)\right].
\end{eqnarray*}
Therefore, we have 
\begin{eqnarray}\label{eq:one_sided_prelimn}
Pr\left(\sum_{\tau=1}^\ell \tilde{v}_{i,\tau} \bbm{1}(i \in S_\tau) > (1+\delta_i) v_i n_i(\ell)\right) \leq   \bb{E}^{\frac{1}{2}}\left[\underset{\lambda \in \Omega }{\min}\exp\Big(-2\lambda (1+\delta_i)v_i (1-f(\lambda,v_i)) n_i(\ell)\Big)\right], \;\;\;\;\;\;
\end{eqnarray}
where $\Omega = \{\lambda | 0 < \lambda < \frac{1}{2}\log{\frac{1+v_i}{v_i}}\}$ is the range of $\lambda$ for which the moment generating function in \eqref{eq:lhs_mgf_ineq} is well definred. Taking logarithm of the objective in \eqref{eq:one_sided_prelimn}, we have,
\begin{eqnarray}\label{convex_opt}
\underset{\lambda \in \Omega}{\text{argmin}}~e^{-2\lambda (1+\delta_i)v_i (1-f(\lambda,v_i)) \cdot n_i(\ell)} = \underset{\lambda \in \Omega}{\text{argmin}} -2(1+\delta_i) \lambda n_i(\ell) v_i  -  n_i(\ell)\log\left({1-v_i(e^{2\lambda} - 1)}\right).
\end{eqnarray}
Noting that the right hand side in the above equation is a convex function in $\lambda$, we obtain the optimal $\lambda$ by solving for the zero of the derivative. Specifically, at optimal $t$, we have 
$$e^{2\lambda} = \frac{(1+\delta_i)(1+v_i)}{1+v_i(1+\delta_i)}.$$
Substituting the above expression in \eqref{eq:one_sided_prelimn}, we obtain the following bound. 
\begin{eqnarray}\label{eq:one_sided_prelimn_bound}
Pr\left(\hat{v}_{i}(\ell) > (1+\delta_i) v_i \right) \leq \bb{E}^{\frac{1}{2}}\left[{\left(1 - \frac{\delta_i}{(1+\delta_i)(1+v_i)}\right)}^{n_i(\ell)v_i(1+\delta_i)} {\left(1+\frac{\delta_i v_i}{1+v_i}\right)^{n_i(\ell)}}\right].
\end{eqnarray}
For notational brevity, we will use $n$ to denote the random variable $n_i(\ell)$ and focus on bounding the right hand term in the above equation.

From Taylor series of $\log{(1-x)}$, we have that 
\begin{eqnarray*}\label{geq_second_order_del1}
nv_i(1+\delta_i)\log{\left(1- \frac{\delta_i}{(1+\delta_i)(1+v_i)} \right)} \leq -\frac{n\delta_i v_i}{1+v_i} - \frac{n\delta_i^2 v_i}{2(1+\delta_i)(1+v_i)^2},
\end{eqnarray*}
 From Taylor series for $\log{(1+x)}$, we have 
\begin{eqnarray*}\label{geq_first_order_del1}
n \log{\left(1+\frac{\delta_i v_i}{1+v_i}\right)} \leq \frac{n\delta_i v_i}{(1+v_i)}.
\end{eqnarray*}
Note that if $\delta_i > 1$, we can use the fact that $\log{(1+\delta_i x)} \leq \delta_i \log{(1+x)}$ to arrive at the preceding result.
Substituting the preceding two equations  in \eqref{eq:one_sided_prelimn_bound}, we have
\begin{eqnarray}\label{eq:geq_mul1}
Pr\left(\hat{v}_{i}(\ell) > (1+\delta_i) v_i \right) \leq 
\bb{E}^{\frac{1}{2}}\left[\exp{\left(-\frac{n \delta_i^2v_i}{2(1+\delta_i)(1+v_i)^2}\right)}\right].
\end{eqnarray}

\subsection*{ Bounding ${Pr\left(\hat{v}_{i}(\ell) < (1-\delta_i) v_i \right)}$:} Now to bound the other one sided inequality,  we use the fact that for any $\lambda >0$, $$\bb{E}\left(e^{-\lambda \tilde{v}_{i,\tau}\bbm{1}(i \in S_\tau)}\middle | \ep{F}_{\tau}\right) = \left(\frac{1}{1-v_i(e^{-\lambda}-1)}\right)^{\bbm{1}(i \in S_\tau)}.$$ and follow a similar approach. More specifically, from Markov Inequality,  for any $\lambda > 0$ and \mbox{$0 <\delta_i < 1,$}  we have 
\begin{eqnarray}\label{eq:rhs_MI}
\begin{aligned}
Pr\left(\sum_{\tau=1}^\ell \tilde{v}_{i,\tau} \bbm{1}(i \in S_\tau) < (1-\delta_i) v_i n_i(\ell)\right) &= Pr \left\{ \exp\left(-\lambda \sum_{\tau=1}^\ell \tilde{v}_{i,\tau} \bbm{1}(i \in S_\tau) \right)> \exp\left(-\lambda (1-\delta_i) v_i n_i(\ell)\right)\right\},\\
& = Pr \left\{ \exp\left(-\lambda \sum_{\tau=1}^\ell \tilde{v}_{i,\tau} \bbm{1}(i \in S_\tau) + \lambda (1-\delta_i) v_in_{i}(\ell)\right) > 1\right\}, \\
& \leq \bb{E} \left[\exp\left(-\lambda \sum_{\tau=1}^\ell \tilde{v}_{i,\tau} \bbm{1}(i \in S_\tau) + \lambda(1-\delta_i) v_in_i(\ell)\right)\right].
\end{aligned}
\end{eqnarray}
For notational brevity, denote $f(\lambda,v_i)$ by the function, 
$$f(\lambda,v_i) = -\frac{\log\left({1-v_i(e^{-2\lambda}-1)}\right)}{2}.$$
We have, 
\begin{eqnarray}\label{eq:rhs_neg_corr}
\begin{aligned}
&\bb{E} \left[\exp\left(-\lambda \sum_{\tau=1}^\ell \tilde{v}_{i,\tau} \bbm{1}(i \in S_\tau) + \lambda(1-\delta_i) v_in_i(\ell)\right)\right]\\
 &= \bb{E} \left[\exp\left(\sum_{\tau = 1}^\ell (-\lambda\tilde{v}_{i,\tau} -f(\lambda,v_i))\cdot \bbm{1}(i \in S_\tau)\right) \cdot \exp\Big(\lambda (1-\delta_i)v_i (1+f(\lambda,v_i)) n_i(\ell)\Big)\right],\\
&\leq \bb{E}^{\frac{1}{2}} \left[\exp\left(\sum_{\tau = 1}^\ell (-2\lambda\tilde{v}_{i,\tau} -2f(\lambda,v_i))\cdot \bbm{1}(i \in S_\tau)\right) \right] \cdot \bb{E}^{\frac{1}{2}}\left[\exp\Big(2\lambda (1-\delta_i)v_i (1+f(\lambda,v_i))  n_i(\ell)\Big)\right],
\end{aligned}
\end{eqnarray}
where the above inequality follows from Cauchy-Schwartz inequality. Let $\ep{F}_\tau$ be the filtration corresponding to the history until epoch $\tau$. Note that for any $\tau$, $\bbm{1}(i \in S_\tau)$ conditioned on $F_{\tau}$ is a constant and $\{\tilde{v}_{i,\tau} | \ep{F}_\tau\}$ is a geometric random variable. Therefore, for all $\tau \geq 1$ and for any $\lambda > 0,$ we have, $$\bb{E}\left(e^{-2\lambda \tilde{v}_{i,\tau}\bbm{1}(i \in S_\tau)}\middle | \ep{F}_{\tau}\right) = \left(\frac{1}{1-v_i(e^{-2\lambda}-1)}\right)^{\bbm{1}(i \in S_\tau)}.$$
Therefore, it follows that 
\begin{eqnarray}\label{eq:rhs_mgf_ineq}
\bb{E}\left(e^{(-2\lambda \tilde{v}_{i,\tau}- 2f(\lambda,v_i)) \cdot \bbm{1}(i \in S_\tau)}\middle | \ep{F}_{\tau}\right) \leq 1,
\end{eqnarray}
and 
\begin{eqnarray*}
\begin{aligned}
\hspace{-5mm}\bb{E} \left[\exp\left(\sum_{\tau = 1}^\ell (-2\lambda\tilde{v}_{i,\tau} -2f(\lambda,v_i))\cdot \bbm{1}(i \in S_\tau)\right) \right] &= \bb{E} \left[\bb{E}\left\{\exp\left(\sum_{\tau = 1}^\ell (-2\lambda\tilde{v}_{i,\tau} -2f(\lambda,v_i))\cdot \bbm{1}(i \in S_\tau)\right)  \middle | \ep{F}_\ell\right\}\right],\\
&\hspace{-40mm}= \bb{E} \left[\prod_{\tau=1}^{\ell-1}\exp\left((-2\lambda\tilde{v}_{i,\tau} -2f(\lambda,v_i))\cdot \bbm{1}(i \in S_\tau)\right)\cdot \bb{E}\left(e^{(-2\lambda \tilde{v}_{i,\ell}- 2f(\lambda,v_i)) \cdot \bbm{1}(i \in S_\ell)}\middle | \ep{F}_{\ell}\right)\right],\\
&\hspace{-35mm}=  \bb{E} \left[\prod_{\tau=1}^{\ell-1}\exp\left( (-2\lambda\tilde{v}_{i,\tau} -2f(\lambda,v_i))\cdot \bbm{1}(i \in S_\tau)\right)\right],
\end{aligned}
\end{eqnarray*}
where the inequality follows from \eqref{eq:rhs_mgf_ineq}. Similarly by conditioning with $\ep{F}_{\ell-1},  \cdots, \ep{F}_1$, we obtain, 
$$\bb{E} \left[\exp\left(\sum_{\tau = 1}^\ell (-2\lambda\tilde{v}_{i,\tau} -2f(\lambda,v_i))\cdot \bbm{1}(i \in S_\tau)\right) \right] \leq 1.$$
From \eqref{eq:rhs_MI} and \eqref{eq:rhs_neg_corr}, we have
\begin{eqnarray*}
Pr\left(\sum_{\tau=1}^\ell \tilde{v}_{i,\tau} \bbm{1}(i \in S_\tau) < (1-\delta_i) v_i n_i(\ell)\right) \leq \bb{E}^{\frac{1}{2}}\left[\exp\Big(2\lambda (1-\delta_i)v_i (1+f(\lambda,v_i))  n_i(\ell)\Big)\right].
\end{eqnarray*}
Therefore, we have 
\begin{eqnarray*}
Pr\left(\hat{v}_i (\ell)< (1-\delta_i) v_i \right) \leq  \bb{E}^{\frac{1}{2}}\left[\underset{ \lambda > 0}  {\min}\exp\Big(2\lambda (1-\delta_i)v_i (1+f(\lambda,v_i))  n_i(\ell)\Big)\right].
\end{eqnarray*}
Following similar approach as in optimizing the previous bound (see \eqref{eq:one_sided_prelimn}) to establish the following result. For notational brevity, we will use $n$ to denote the random variable $n_i(\ell)$.
\begin{eqnarray*}
Pr\left(\hat{v}_i(\ell) < (1-\delta_i) v_i\right) \leq \bb{E}^{\frac{1}{2}}\left[{\left(1 + \frac{\delta_i}{(1-\delta_i)(1+v_i)}\right)}^{nv_i(1-\delta_i)} {\left(1-\frac{\delta_i v_i}{1+v_i}\right)^n}\right].
\end{eqnarray*}
Now we will use Taylor series for $\log{(1+x)}$ and $\log{(1-x)}$ in a similar manner as described for the other bound to obtain the required result. In particular, since $1-\delta_i \leq 1$, we have for any $x > 0$ it follows that $(1+\frac{ x}{1-\delta_i})^{(1-\delta_i)} \leq (1+x)$ . Therefore, we have
\begin{eqnarray}\label{eq:one_sided_prelimn_bound_leq}
Pr\left(\hat{v}_i(\ell) < (1-\delta_i) v_i\right) \leq \bb{E}^{\frac{1}{2}}\left[{\left(1 + \frac{\delta_i}{1+v_i}\right)}^{nv_i} {\left(1-\frac{\delta_i v_i}{1+v_i}\right)^n}\right].
\end{eqnarray}

Note that since $\tilde{v}_{i,\tau} \geq 0$ for all $i,\tau$, we have a zero probability event if $\delta_i > 1$. Therefore, without loss of generality, we assume $\delta_i < 1$ and from Taylor series for $\log{(1-x)}$, we have 
\begin{eqnarray*}\label{leq_first_order_del1}
n\log{\left(1- \frac{\delta_iv_i}{1+v_i} \right)} \leq -\frac{n\delta_i v_i}{1+v_i}, 
\end{eqnarray*}
 and from Taylor series for $\log{(1+x)}$, we have 
\begin{eqnarray*}
n \log{\left(1+\frac{\delta_i v_i}{1+v_i}\right)} \leq \frac{n\delta_i }{(1+v_i)} - \frac{n \delta_i^2 v_i}{6 (1+v_i)^2}\left(3 - \frac{2\delta_i v_i}{1+v_i}\right).
\end{eqnarray*}
Therefore, substituting the preceding equations  in \eqref{eq:one_sided_prelimn_bound_leq}, we have, 
\begin{eqnarray}\label{eq:leq_mul1}
Pr\left(\hat{v}_i < (1-\delta_i) v_i\right) \leq 
\exp{\left(- \frac{n \delta_i^2 v_i}{6 (1+v_i)^2}\left(3 - \frac{2\delta_i\mu}{1+v_i}\right)\right)}.
\end{eqnarray}

The result follows from \eqref{eq:geq_mul1} and  \eqref{eq:leq_mul1}.   \hfill $\Halmos$
\end{proof}

\subsection*{Proof of Lemma \ref{multiplicative_chernoff_estimates}.}

Let $\delta_i =  \sqrt{\frac{4 (v_i+2)m\log{(\rho+1)}}{v_i n_i(\ell)}} .$ We analyze the cases $\delta_i \leq \frac{1}{2}$ and $\delta_i \geq \frac{1}{2}$ separately. 

\medskip \noindent {\bf Case 1: $\delta_i \leq \frac{1}{2}:$}  For any $v_i \leq 1$ and $\delta_i \leq 1/2$, we have,$$\frac{v_i \delta_i^2n_i(\ell)}{2(1+\delta_i)(1+v_i)^2} \geq \frac{v_i \delta_i^2n_i(\ell)}{6(1+v_i)} \geq {m\log{(\rho+1)}},$$
and
$$\frac{v_i\delta_i^2 n_i(\ell)}{6 (1+v_i)^2}\left(3 - \frac{2\delta_iv_i}{1+v_i}\right) \geq \frac{v_i \delta_i^2n_i(\ell)}{6(1+v_i)} \geq {m\log{(\rho+1)}}.$$
Therefore, substituting $\delta_i =  \sqrt{\frac{4 (v_i+2)m\log{(\rho+1)}}{v_i n_i(\ell)}} $ in Theorem \ref{multiplicative_chernoff_geometric} with $\delta_i$,  we have, 
\begin{eqnarray}
\begin{aligned}
\ep{P}\left(2\hat{v}_{i}(\ell) \geq {v_i}\right) &\geq  1-\frac{1}{\rho^m},\\
 \ep{P}\left(\left|\hat{v}_{i}(\ell) - {v_i}\right| <  \sqrt{\frac{4v_i (v_i+2)m\log{(\rho+1)}}{ n_i(\ell)}}\right) &\geq  1-\frac{2}{\rho^m}.
\end{aligned}
\end{eqnarray}
From the above three results, we have,
\begin{eqnarray}\label{ineq_part_2_mu_l1}
\ep{P}\left(\left|\hat{v}_{i}(\ell) - {v_i}\right| < \sqrt{\frac{16\hat{v}_{i}(\ell)\left(\hat{v}_{i}(\ell)+1\right)\log{(\rho+1)}}{n_i(\ell)}}\right) \geq \ep{P}\left(\left|\hat{v}_{i}(\ell) - {v_i}\right| < \sqrt{\frac{4v_i(v_i+2) \log{(\rho+1)}}{n_i(\ell)}}\right) \geq  1-\frac{3}{\rho^m}. \;\;\;\;\;\;\;\;\;\;
\end{eqnarray}
By assumption, $v_i \leq 1$. Therefore, we have $v_i(v_i+2) \leq 3v_i$ and, 
\begin{eqnarray*}
 \ep{P}\left(\left|\hat{v}_{i}(\ell) - {v_i}\right| < \sqrt{\frac{12v_i \log{(\rho+1)}}{n_i(\ell)}}\right) \geq  1-\frac{3}{\rho^m}.
\end{eqnarray*}

\medskip \noindent {\bf Case 2: ${\delta}_i > \frac{1}{2}:$} Now consider the scenario, when $\sqrt{\frac{4 (v_i+2)m\log{(\rho+1)}}{v_i n_i(\ell)}} > \frac{1}{2}$. Then, we have, 
$$\bar{\delta}_{i} \overset{\Delta}{=} {\frac{8(v_i+2)m \log{(\rho+1)}}{v_i n_i(\ell)}}  \geq \frac{1}{2},$$ which implies for any $v_i \leq 1$,
\begin{eqnarray*}
\begin{aligned}
\frac{nv_i \bar{\delta}_i^2}{2(1+\bar{\delta}_i)(1+v_i)^2} & \geq \frac{nv_i \bar{\delta}_i}{12(1+v_i)},\\
\frac{n \bar{\delta}_i^2 v_i}{6 (1+v_i)^2}\left(3 - \frac{2\bar{\delta}_iv_i}{1+v_i}\right) & \geq \frac{nv_i \bar{\delta}_i}{12(1+v_i)}.
\end{aligned}
\end{eqnarray*}
Therefore, substituting the value of $\bar{\delta}_i$ in Theorem \ref{multiplicative_chernoff_geometric}, we have 
\begin{eqnarray*}
\ep{P}\left(\left|\hat{v}_{i}(\ell) - v_i\right|> \frac{24m\log{(\rho+1)}}{n}\right) \leq \frac{2}{\rho^m}. 
\end{eqnarray*}


\end{APPENDICES}
\section*{Acknowledgments.}
V. Goyal is supported in part by NSF Grants CMMI-1351838 (CAREER) and CMMI-1636046. A. Zeevi is supported in part by NSF Grants NetSE-0964170 and  BSF-2010466.

%


%
%
%






\bibliographystyle{}
\bibliography{acmsmall-sample-bibfile}

\end{document}